\newtheorem{theorem}{Theorem}
\newtheorem{definition}{Definition}
\newtheorem{lemma}{Lemma}
\newtheorem{assumption}{Assumption}
\def\1{\bm{1}}
\def\rvv{{\mathbf{v}}}
\def\rmD{{\mathbf{D}}}
\def\rmX{{\mathbf{X}}}
\def\rmY{{\mathbf{Y}}}
\DeclareMathAlphabet{\mathsfit}{\encodingdefault}{\sfdefault}{m}{sl}
\SetMathAlphabet{\mathsfit}{bold}{\encodingdefault}{\sfdefault}{bx}{n}
\newcommand{\R}{\mathbb{R}}
\newcommand{\bD}{\mathbf{D}}
\newcommand{\D}{\mathbf{D}}
\newcommand{\bA}{\mathbf{A}}
\newcommand{\bU}{\mathbf{U}}
\newcommand{\bV}{\mathbf{V}}
\newcommand{\bSigma}{\mathbf{\Sigma}}
\newcommand{\bX}{\mathbf{X}}
\newcommand{\bY}{\mathbf{Y}}
\newcommand{\bI}{\mathbf{I}}
\newcommand{\bP}{\mathbf{P}}
\newcommand{\bPi}{\mathbf{\Pi}}
\newcommand{\bx}{\mathbf{x}}
\newcommand{\by}{\mathbf{y}}
\newcommand{\bbracket}[1]{\left(#1\right)}
\newcommand{\bip}[1]{\left\langle#1\right\rangle}
\newcommand{\bnorm}[1]{\left\|#1\right\|}
\title{Personalized Dictionary Learning for Heterogeneous Datasets
}
\author{
	Geyu Liang\\
	Industrial and Operations Engineering\\
	University of Michigan\\
	\texttt{lianggy@umich.edu}
	\and
	Naichen Shi\\
	Industrial and Operations Engineering\\
	University of Michigan\\
	\texttt{naichens@umich.edu}
	\and
	Raed Al Kontar\\
	Industrial and Operations Engineering\\
	University of Michigan\\
	\texttt{alkontar@umich.edu}
	\and 
	Salar Fattahi\\
	Industrial and Operations Engineering\\
	University of Michigan\\
	\texttt{fattahi@umich.edu}
}
\begin{document}
\maketitle

\begin{abstract}
   We introduce a relevant yet challenging problem named \textit{Personalized Dictionary Learning} ($\mathsf{PerDL}$), where the goal is to learn sparse linear representations from heterogeneous datasets that share some commonality. In $\mathsf{PerDL}$, we model each dataset's shared and unique features as \textit{global} and \textit{local} dictionaries. Challenges for $\mathsf{PerDL}$ not only are inherited from classical \textit{dictionary learning} (DL), but also arise due to the unknown nature of the shared and unique features. In this paper, we rigorously formulate this problem and provide conditions under which the global and local dictionaries can be provably disentangled. Under these conditions, we provide a meta-algorithm called \textit{Personalized Matching and Averaging} ($\mathsf{PerMA}$) that can recover both global and local dictionaries from heterogeneous datasets. $\mathsf{PerMA}$ is highly efficient; it converges to the ground truth at a linear rate under suitable conditions. Moreover, it automatically borrows strength from strong learners to improve the prediction of weak learners. As a general framework for extracting global and local dictionaries, we show the application of $\mathsf{PerDL}$ in different learning tasks, such as training with imbalanced datasets and video surveillance. 
\end{abstract}
\section{Introduction}
Given a set of $n$ signals $\bY = [\by_1,\dots,\by_n]\in \R^{d\times n}$, \textit{dictionary learning} (DL) aims to find a \textit{dictionary} $\bD\in\R^{d\times r}$ and a corresponding \textit{code} $\bX = [\bx_1,\dots,\bx_n]\in \R^{r\times n}$ such that:  (1) each data sample $\by_i$ can be written as $\by_i=\bD\bx_i$ for $1\le i\le n$, and (2) the code $\bX$ has as few nonzero elements as possible. The columns of the dictionary $\bD$, also known as \textit{atoms}, encode the ``common features'' whose linear combinations form the data samples.
A typical approach to solve DL is via the following optimization problem:
\begin{equation}
    \min_{\bX,\bD} \|\bY-\bD\bX\|_F^2+\lambda\|\bX\|_{\ell_q},
\tag{DL}\label{objective DL}
\end{equation}
Here $\|\cdot\|_{\ell_q}$ is often modeled as a $\ell_1$-norm \citep{arora2015simple,agarwal2016learning} or $\ell_0$-(pseudo-)norm \citep{spielman2012exact,liang2022simple} and has the role of promoting sparsity in the estimated sparse code. Due to its effective feature extraction and representation, DL has found immense applications in data analytics, with applications ranging from clustering and classification (\cite{ramirez2010classification,tovsic2011dictionary}), to image denoising (\cite{li2011efficient}), to document detection(\cite{kasiviswanathan2012online}), to medical imaging (\cite{zhao2021survey}), and to many others. 

However, the existing formulations of DL hinge on a critical assumption: the homogeneity of the data. It is assumed that the samples share the {same} set of features (atoms) collected in a \textit{single} dictionary $D$. This assumption, however, is challenged in practice as the data is typically collected and processed in heterogeneous edge devices (clients). These clients (for instance, smartphones and wearable devices) operate in different conditions~\citep{kontar2017nonparametric} while sharing some congruity. Accordingly, the collected datasets are naturally endowed with heterogeneous features while potentially sharing common ones. 
In such a setting, the classical formulation of DL faces a major dilemma: on the one hand, a reasonably-sized dictionary (with a moderate number of atoms $r$) may overlook the unique features specific to different clients. On the other hand, collecting both shared and unique features in a single enlarged dictionary (with a large number of atoms $r$) may lead to computational, privacy, and identifiability issues. In addition, both approaches fail to provide information about ``what is shared and unique'' which may offer standalone intrinsic value and can potentially be exploited for improved clustering, classification and anomaly detection, amongst others. 

With the goal of addressing data heterogeneity in dictionary learning, in this paper, we propose \textit{personalized dictionary learning} ($\mathsf{PerDL}$); a framework that can untangle and recover \textit{global} and \textit{local (unique)} dictionaries from heterogeneous datasets. The global dictionary, which collects atoms that are shared among all clients, represents the common patterns among datasets and serves as a conduit of collaboration in our framework. The local dictionaries, on the other hand, provide the necessary flexibility for our model to accommodate data heterogeneity. 

We summarize our contributions below:

\begin{itemize}
    \item [-] {\it Identifiability of local and global atoms:} We provide conditions under which the local and global dictionaries can be provably identified and separated by solving a nonconvex optimization problem. At a high level, our identifiability conditions entail that the true dictionaries are column-wise incoherent, and the local atoms do not have a significant alignment along any nonzero vector. 
    \item [-] {\it Federated meta-algorithm:} We present a fully federated meta-algorithm, called $\mathsf{PerMA}$ (Algorithm~\ref{alg: general alg}), for solving $\mathsf{PerDL}$. $\mathsf{PerMA}$ only requires communicating the estimated dictionaries among the clients, thereby circumventing the need for sharing any raw data. A key property of $\mathsf{PerMA}$ is its ability to untangle global and local dictionaries by casting it as a series of shortest path problems over a \textit{directed acyclic graph} (DAG). 
    \item [-] {\it Theoretical guarantees:}  We prove that, under moderate conditions on the generative model and the clients, $\mathsf{PerMA}$ enjoys a linear convergence to the ground truth up to a statistical error. Additionally, $\mathsf{PerMA}$ borrows strength from strong clients to improve the performance of the weak ones. More concretely, through collaboration, our framework provides weak clients with the extra benefits of \textit{averaged} initial condition, convergence rate, and final statistical error.
    \item [-] {\it Practical performance:} We showcase the performance of $\mathsf{PerMA}$ on a synthetic dataset, as well as different realistic learning tasks, such as training with imbalanced datasets and video surveillance. 
    These experiments highlight that our method can effectively extract shared global features while preserving unique local ones, ultimately improving performance through collaboration. 
\end{itemize}

\subsection{Related Works}
\paragraph{Dictionary Learning} 
\cite{spielman2012exact, liang2022simple} provide conditions under which \ref{objective DL} can be provably solved, provided that the dictionary is a square matrix (also known as \textit{complete} DL). For the more complex case of \textit{overcomplete} DL with $r>d$, \cite{arora2014new,arora2015simple,agarwal2016learning} show that alternating minimization achieves desirable statistical and convergence guarantees.
Inspired by recent results on the benign landscape of matrix factorization~\citep{ge2017no, fattahi2020exact},~\cite{sun2016complete} show that a smoothed variant of \ref{objective DL} is devoid of spurious local solutions. In contrast, distributed or federated variants of \ref{objective DL} are far less explored. \cite{huang2022federated,gkillas2022federated} study \ref{objective DL} in the federated setting. However, they do not provide any provable guarantees on their proposed method.

\paragraph{Federated Learning \& Personalization} Recent years have seen explosive interest in federated learning (FL) following the seminal paper on federated averaging \citep{mcmahan2017communication}. Literature along this line has primarily focused on predictive modeling using deep neural networks (DNN), be it through enabling faster convergence \citep{karimireddy2020scaffold}, improving aggregation schemes at a central server \citep{wang2020federated}, promoting fairness across all clients \citep{yue2022gifair} or protecting against potential adversaries \citep{bhagoji2019analyzing}. More recently, the focus has been placed on tackling heterogeneity across client datasets through personalization. The key idea is to allow each client to retain their own tailored model instead of learning one model that fits everyone. Approaches along this line either split weights of a DNN into shared and unique ones and collaborate to learn the shared weights \citep{liang2020think, t2020personalized}, or follow a train-then-personalize approach where a global model is learned and fine-tuned locally, often iteratively \citep{li2021ditto}. Again such models have mainly focused on predictive models. Whilst this literature abounds, personalization that aims to identify what is shared and unique across datasets is very limited. Very recently, personalized PCA \citep{personalizedpca} was proposed to address this challenge through identifiably extracting shared and unique principal components using distributed Riemannian gradient descent.  However, PCA cannot accommodate sparsity in representation and requires orthogonality constraints that may limit its application. In contrast, our work considers a broader setting via sparse dictionary learning.

\paragraph{Notation.}
For a matrix $\bA$, we use $\|\bA\|_2$, $\|\bA\|_F$, $\|\bA\|_{1,2}$, and $\|\bA\|_1$ to denote its spectral norm, Frobenius norm, the maximum of its column-wise 2-norm, and the element-wise 1-norm of $\bA$, respectively. We use $\bA_i$ to indicate that it belongs to client $i$. Moreover, we use $\bA_{(i)}$ to denote the $i$-th column of $\bA$. We use $\mathcal{P}(n)$ to denote the set of $n \times n$ signed permutation matrices. We define $[n] = \{1,2,\dots, n\}$.  

\section{$\mathsf{PerDL}$: Personalized Dictionary Learning}\label{sec: model}
In $\mathsf{PerDL}$, we are given $N$ clients, each with $n_i$ samples collected in $\bY_i\in \R^{d\times n_i}$ and generated as a sparse linear combination of $r^g$ global atoms and $r_i^l$ local atoms:
\begin{equation}\label{model: PerDL}
    \bY_i = \bD^*_i\bX^*_i,\quad \text{where}\quad \bD^*_i = \begin{bmatrix}
        \bD^{g*}&\bD^{l*}_i
    \end{bmatrix}, \quad \text{for}\quad  1=1,\dots,N.
\end{equation}
Here $\bD^{g*}\in \R^{d\times r^g}$ denotes a global dictionary that captures the common features shared among all clients, whereas $\bD^{l*}_i\in \R^{d\times r^l_i}$ denotes the local dictionary specific to each client. Let $r_i=r^g+r^l_i$ denote the total number of atoms in $\bD^*_i$. Without loss of generality, we assume the columns of $\bD^*_i$ have unit $\ell_2$-norm.\footnote{This assumption is without loss of generality since, for any dictionary-code pair $(\bD_i,\bX_i)$, the columns of $\bD_i$ can be normalized to have unit norm by re-weighting the corresponding rows of $\bX_i$.} The goal in $\mathsf{PerDL}$ is to recover $\bD^{g*}$ and $\{\bD^{l*}_i\}_{i=1}^N$, as well as the sparse codes $\{\bX^*_i\}_{i=1}^N$, given the datasets $\{\bY_i\}_{i=1}^N$. Before presenting our approach for solving $\mathsf{PerDL}$, we first consider the following fundamental question: under what conditions is the recovery of the dictionaries $\bD^{g*},\{\bD^{l*}_i\}_{i=1}^N$ and sparse codes $\{\bX^*_i\}_{i=1}^N$ well-posed? 

To answer this question, we first note that it is only possible to recover the dictionaries and sparse codes up to a signed permutation: given any signed permutation matrix $\bPi\in \mathcal{P}(r_i)$, the dictionary-code pairs $(\bD_i, \bX_i)$ and $(\bD_i\bPi_i, \bPi_i^\top\bX_i)$ are equivalent. This invariance with respect to signed permutation gives rise to an equivalent class of true solutions with a size that grows exponentially with the dimension. To guarantee the recovery of a solution from this equivalent class, we need the $\mu$-incoherency of the true dictionaries.
\begin{assumption}[$\mu$-incoherency]\label{def: incoherency} For each client $1\le i \le N$, the dictionary $\rmD^*_i$ is $\mu$-incoherent for some constant $\mu>0$, that is,
\begin{equation}
\max_{j,k}\left|\left\langle\left(\rmD^*_i\right)_{(j)},\left(\rmD^*_i\right)_{(k)}\right\rangle\right|\le \frac{\mu}{\sqrt{d}}.
\end{equation}
\end{assumption}
Assumption~\ref{def: incoherency} is standard in dictionary learning (\cite{agarwal2016learning,arora2015simple,chatterji2017alternating}) and was independently introduced by \cite{fuchs2005recovery,tropp2006just} in signal processing and \cite{zhao2006model, meinshausen2006high} in statistics. Intuitively, it requires the atoms in each dictionary to be approximately orthogonal. To see the necessity of this assumption, consider a scenario where two atoms of $\bD^*_i$ are perfectly aligned (i.e., $\mu=\sqrt{d}$). In this case, using either of these two atoms achieve a similar effect in reconstructing $\bY_i$, contributing to the ill-posedness of the problem.

Our next assumption guarantees the separation of local dictionaries from the global one in $\mathsf{PerDL}$. First, we introduce several signed permutation-invariant distance metrics for dictionaries, which will be useful for later analysis.
\begin{definition}
    For two dictionaries $\bD_1,\bD_2\in\R^{d\times r}$, we define their signed permutation-invariant $\ell_{1,2}$-distance and $\ell_2$-distance as follows:
    \begin{align}
        d_{1,2}(\bD_1,\bD_2) &:= \min_{\bPi\in\mathcal{P}(r)} \|\bD_1\bPi-\bD_2\|_{1,2},\\
        d_{2}(\bD_1,\bD_2) &:= \min_{\bPi\in\mathcal{P}(r)} \|\bD_1\bPi-\bD_2\|_{2}.
    \end{align}
    Furthermore, suppose $\bPi^* = \arg\min_{\bPi\in\mathcal{P}(r)} \|\bD_1\bPi-\bD_2\|_{1,2}$. For any $1\le j \le r$, we define
    \begin{equation}
        d_{2,(j)}(\bD_1,\bD_2) := \bnorm{\bbracket{\bD_1\bPi^*-\bD_2}_{(j)}}_2.
    \end{equation}
\end{definition}
\begin{assumption}[$\beta$-identifiablity]\label{def: identifiablity}
The local dictionaries $\left\{\rmD^{l*}_i\right\}_{i=1}^N$ are $\beta$-identifiable for some constant $0<\beta<1$, that is, there exists no vector $\rvv\in\R^d$ with $\|\rvv\|_2=1$ such that 
\begin{equation}\label{eq: condition no global candidate}
    \max_{1\le i\le N}\min_{1\le j \le r_l}d_2\bbracket{\left(\rmD^{l*}_i\right)_{(j)} , \rvv }\le \beta.
\end{equation}
\end{assumption}
Suppose there exists a unit-norm $\rvv$ satisfying~\eqref{eq: condition no global candidate} for some small $\beta>0$. This implies that $\rvv$ is sufficiently close to at least one atom from each local dictionary. Indeed, one may treat this atom as part of the global dictionary, thereby violating the identifiability of local and global dictionaries. On the other hand, the infeasibility of ~\eqref{eq: condition no global candidate} for large $\beta>0$ implies that the local dictionaries are sufficiently dispersed, which in turn facilitates their identification.   

With the above assumptions in place, we are ready to present our proposed optimization problem for solving $\mathsf{PerDL}$:
\begin{equation}\tag{$\texttt{PerDL-NCVX}$}\label{objective PerDL}
    \begin{aligned}
        \min_{\bD^g,\{\bD_i\},\{\bX_i\}}\sum_{i=1}^N\|\bY_i-\bD_i\bX_i\|_F^2+\lambda\sum_{i=1}^N\|\bX_i\|_{\ell_q},\quad 
    \mathrm{s.t.}\ \ \bbracket{\bD_i}_{(1:r^g)} = \bD^g\quad\mathrm{for}\quad 1\le i\le N.
    \end{aligned}
\end{equation}
For each client $i$, \ref{objective PerDL} aims to recover a dictionary-code pair $(\bD_i, \bX_i)$ that match $\bY_i$ under the constraint that dictionaries for individual clients share the same global components.

\section{Meta-algorithm of Solving PerDL}\label{sec: framework}
In this section, we introduce our meta-algorithm (Algorithm~\ref{alg: general alg}) for solving \ref{objective PerDL}, which we call \textit{Personalized Matching and Averaging} ($\mathsf{PerMA}$).
\begin{algorithm}
	\caption{$\mathsf{PerMA}$: Federated Matching and Averaging}
	\label{alg: general alg}
	\begin{algorithmic}[1]
		\STATE{{\bf Input:} $\{\bY_i\}_{i=1}^N$.}
            \FOR{client $i=1,...,N$}
            \STATE {\it Client:} Obtain $\bD^{(0)}_i$ based on $\bY_i$. $\hfill\color{cyan} \texttt{// Initialization step}$\label{step: local initialization}
            \STATE {\it Client:} Send $\bD^{(0)}_i$ to the server.
            \ENDFOR
            \STATE{{\it Server:} $\left(\bD^{g,(0)}, \{\bD^{l,(0)}_i\}_{i=1}^N\right) = \texttt{global\_matching}\left(\{\bD^{(0)}_i\}_{i=1}^N\right)$}\\$\hfill\color{cyan} \texttt{// Separating local from global dictionaries}$\label{step: global matching}
            \STATE {\it Server:} Broadcast $\left(\bD^{g,(0)}, \{\bD^{l,(0)}_i\}_{i=1}^N\right)$
		\FOR{$t = 0,1,\dots, T$}
                \FOR{client $i=1,...,N$}
                \STATE{ {\it Client:} $\left(\bD^{g,(t+1)}_i, \bD^{l,(t+1)}_i\right) = \texttt{local\_update}\left(\bY_i, \bD^{g,(t)}, \bD^{l,(t)}_i\right)$}\label{step: local dictionary update}\\$\hfill\color{cyan} \texttt{// Updating the local and global dictionaries for each client}$
                \STATE{{\it Client:} Send $\bD^{g,(t+1)}_i$ to the server.}
                \ENDFOR
		      \STATE{{\it Server:} Calculate $\bD^{g,(t+1)}=\frac{1}{N}\sum_{i=1}^N\bD^{g,(t+1)}_i$.}\label{step: global aggregation}$\hfill\color{cyan} \texttt{// Averaging global dictionaries}$
                \STATE{{\it Server:} Broadcast $\bD^{g,(t+1)}$.}
		\ENDFOR
		\RETURN{$\left(\bD^{g,(T)},\{\bD^{l,(T)}_i\}_{i=1}^N\right)$}. 
	\end{algorithmic}
\end{algorithm}
In what follows, we explain the steps of $\mathsf{PerMA}$:
\paragraph{Local initialization (Step~\ref{step: local initialization}):} $\mathsf{PerMA}$ starts with a warm-start step where each client runs their own initialization scheme to obtain $\bD^{(0)}_i$. This step is necessary even for the classical DL to put the initial point inside a basin of attraction of the ground truth. Several spectral methods were proposed to provide a theoretically good initialization\citep{arora2015simple,agarwal2016learning}, while in practice,  it is reported that random initialization followed by a few iterations of alternating minimization approach will suffice \citep{ravishankar2020analysis,liang2022simple}.
\paragraph{Global matching scheme (Step~\ref{step: global matching})} Given the clients' initial dictionaries, our global matching scheme separates the global and local parts of each dictionary by solving a series of shortest path problems on an auxiliary graph. Then, it obtains a refined estimate of the global dictionary via simple averaging. A detailed explanation of this step is provided in the next section. 
\paragraph{Dictionary update at each client (Step~\ref{step: local dictionary update})} During each communication round, the clients refine their own dictionary based on the available data, the aggregated global dictionary, and the previous estimate of their local dictionary. A detailed explanation of this step is provided in the next section.
\paragraph{Global aggregation (Step~\ref{step: global aggregation})} At the end of each round, the server updates the clients' estimate of the global dictionary by computing their average. 

A distinguishing property of $\mathsf{PerMA}$ is that it only requires the clients to communicate their dictionaries and not their sparse codes. In fact, after the global matching step on the initial dictionaries, the clients only need to communicate their global dictionaries, keeping their local dictionaries private. 

\subsection{Global Matching and Local Updates}\label{subsec: selection}
In this section, we provide detailed implementations of \texttt{global\_matching} and \texttt{local\_update} subroutines in $\mathsf{PerMA}$ (Algorithm~\ref{alg: general alg}).

Given the initial approximations of the clients' dictionaries $\{\bD_i^{(0)}\}_{i=1}^N$, \texttt{global\_matching} seeks to identify and aggregate the global dictionary by extracting the similarities among the atoms of $\{\bD_i^{(0)}\}$. To identify the global components, one approach is to solve the following optimization problem
\begin{equation}\label{opt_matching}
    \begin{aligned}
        \min_{\bPi_{i}} \sum_{i=1}^{N-1}\left\|\left(\bD_i^{(0)}\bPi_i\right)_{(1:r^g)}-\left(\bD_{i+1}^{(0)}\bPi_{i+1}\right)_{(1:r^g)}\right\|_2\quad 
        \mathrm{s.t.}\quad & \bPi_i\in\mathcal{P}(r_i)\quad\mathrm{for}\quad 1\le i\le N.
    \end{aligned}
\end{equation}

\begin{figure}
    \centering
        \includegraphics[width=\textwidth]{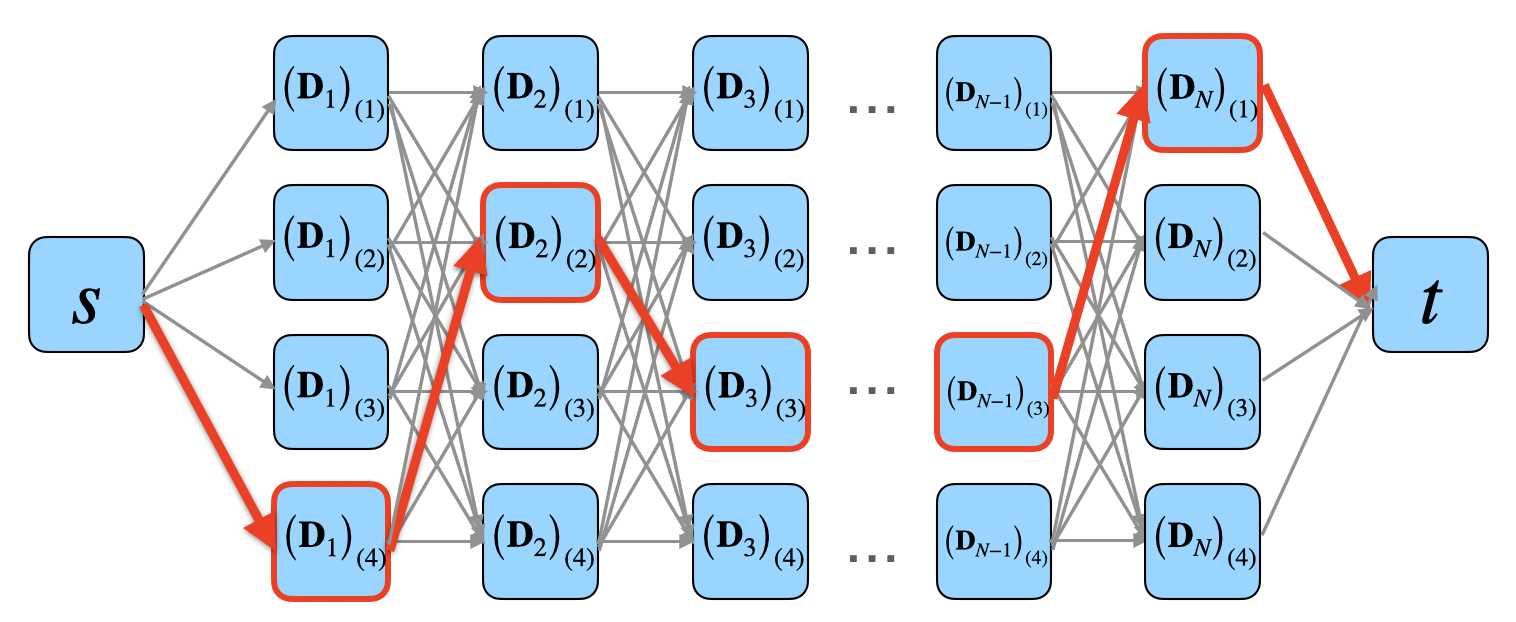}
        \caption{ A schematic diagram for \texttt{global\_matching} (Algorithm~\ref{alg: shortest path}). At each iteration, we find the shortest path from $s$ to $t$ (highlighted with red), estimate one atom of $\bD^{g*}$ using all passed vertices and remove the path (including the vertices) from $\mathcal{G}$. 
        }
        \label{fig:path}
\end{figure}
The above optimization aims to obtain the appropriate signed permutation matrices $\{\bPi_i\}_{i=1}^N$ that align the first $r^g$ atoms of the permuted dictionaries. In the ideal regime where $\bD_i^{(0)} = \bD_i^*, 1\leq i\leq N$, the optimal solution $\{\bPi^*_i\}_{i=1}^N$ yields a zero objective value and satisfies $\left(\bD_i^{(0)}\bPi^*_i\right)_{(1:r^g)} = \bD^{g*}$. However, there are two main challenges with the above optimization. First, it is a nonconvex, combinatorial problem over the discrete sets $\{\mathcal{P}(r_i)\}$. Second, the initial dictionaries may not coincide with their true counterparts. To address the first challenge, we show that the optimal solution to the optimization~\eqref{opt_matching} can be efficiently obtained by solving a series of shortest path problems defined over an auxiliary graph. To alleviate the second challenge, we show that our proposed algorithm is robust against possible errors in the initial dictionaries.

Consider a weighted $N$-layered {\it directed acyclic graph} (DAG) $\mathcal{G}$ with $r_i$ nodes in layer $i$ representing the $r_i$ atoms in $\bD_i$. We connect any node $a$ from layer $i$ to any node $b$ from layer $i+1$ with a directed edge with weight $w(a,b) = d_2\left(\left(\bD_i\right)_{a}, \left(\bD_{i+1}\right)_{b}\right)$. We add a \textit{source} node $s$ and connect it to all nodes in layer 1 with weight 0. Similarly, we include a \textit{terminal} node $t$ and connect all nodes in layer $N$ to $t$ with weight 0. A schematic construction of this graph is presented in Figure~\ref{fig:path}. Given the constructed graph, Algorithm~\ref{alg: shortest path} aims to solve~\eqref{opt_matching} by running $r^g$ rounds of the shortest path problem: at each round, the algorithm identifies the most aligned atoms in the initial dictionaries by obtaining the shortest path from $s$ to $t$. Then it removes the used nodes in the path for the next round. The correctness and robustness of the proposed algorithm are established in the next theorem.

\begin{algorithm}
	\caption{\texttt{global\_matching}}
	\label{alg: shortest path}
	\begin{algorithmic}[1]
		\STATE{{\bf Input:} $\left\{\bD^{(0)}_i\right\}_{i=1}^N$ and $r^g$.}
            \STATE{Construct the weighted $N$-layer DAG $\mathcal{G}$ described in Section~\ref{subsec: selection}.}
            \STATE Initialize $\{\mathrm{index}_i\}_{i=1}^N$ as empty sets and $\bD^{g,(0)}$ as an empty matrix.
            \FOR{$j=1,\dots,r^g$}
                \STATE{Find the shortest path $P = \left(s, (\bD^{(0)}_1)_{(\alpha_1)}, (\bD^{(0)}_1)_{(\alpha_2)},\cdots, (\bD^{(0)}_N)_{(\alpha_N)}, t\right)$.}
                \STATE{Add $\frac{1}{N}\sum_{i=1}^N\mathrm{sign}\bbracket{\left\langle(\bD^{(0)}_i)_{(\alpha_i)},(\bD^{(0)}_1)_{(\alpha_1)}\right\rangle}(\bD^{(0)}_i)_{(\alpha_i)}$ as a new column of $\bD^{g,(0)}$.}
                \STATE Add $\alpha_i$ to $\mathrm{index}_i$ for every $i=1,\dots,N$.
                \STATE{Remove $\mathcal{P}$ from $\mathcal{G}$}.
            \ENDFOR
            \STATE Set $\bD_i^{l,(0)}=(\bD^{(0)}_i)_{([r_i]\setminus\mathrm{index}_i)}$ for every $i=1,\dots,N$.
		\RETURN{$\left(\bD^{g,(0)}, \left\{\bD_i^{l,(0)}\right\}_{i=1}^N\right)$.}
	\end{algorithmic}
\end{algorithm}

\begin{theorem}[Correctness and robustness of \texttt{global\_matching}]\label{theorem: shortest path}
Suppose $\{\bD^*_i\}_{i=1}^N$ are $\mu$-incoherent (Assumption~\ref{def: incoherency}) and $\beta$-identifiable (Assumption~\ref{def: identifiablity}). Suppose the initial dictionaries $\left\{\bD^{(0)}_i\right\}_{i=1}^N$ satisfy $d_{1,2}\left(\bD^{(0)}_i,\bD^*_i\right)\le \epsilon_i$ with $4\sum_{i=1}^N \epsilon_i \le \min\left\{\sqrt{2-2\frac{\mu}{\sqrt{d}}},\beta\right\}$. Then, the output of Algorithm~\ref{alg: shortest path} satisfies:
\begin{align}
    d_{1,2}\left(\bD^{g,(0)} , \bD^{g*}\right) \le \frac{1}{N}\sum_{i=1}^N \epsilon_i,\qquad\text{and}\qquad 
    d_{1,2}\left(\bD^{l,(0)}_i , \bD^{l*}_i\right)\le \epsilon_i, \quad\mathrm{for}\quad 1\le i\le N.
\end{align}
\end{theorem}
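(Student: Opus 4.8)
The plan is to analyze the two-layer structure of the argument: first establish that in the \emph{ideal} case (exact initial dictionaries) the shortest-path algorithm correctly extracts the $r^g$ global atoms layer by layer, and then perturb this argument to accommodate errors $\epsilon_i$ in the initial dictionaries. The key quantitative fact I would isolate at the outset is a dictionary-of-distance conversion: for unit-norm vectors $\vu,\vv$, the signed-permutation-invariant distance $d_2(\vu,\vv) = \min\{\|\vu-\vv\|_2,\|\vu+\vv\|_2\} = \sqrt{2-2|\langle \vu,\vv\rangle|}$. This lets me translate the $\mu$-incoherency bound $|\langle (\bD^*_i)_{(j)},(\bD^*_i)_{(k)}\rangle| \le \mu/\sqrt d$ into a \emph{lower bound} $d_2\big((\bD^*_i)_{(j)},(\bD^*_i)_{(k)}\big) \ge \sqrt{2-2\mu/\sqrt d}$ on the distance between any two distinct atoms of the same true dictionary, and likewise translate $\beta$-identifiability into: for every unit $\vv$, some client $i$ has all its local atoms at distance $>\beta$ from $\vv$.

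First I would handle the ideal case $\bD^{(0)}_i = \bD^*_i$. The source $s$ connects to layer $1$ with weight $0$ and layer $N$ to $t$ with weight $0$, so the length of an $s$–$t$ path equals $\sum_{i=1}^{N-1} d_2\big((\bD^*_i)_{(\alpha_i)},(\bD^*_{i+1})_{(\alpha_{i+1})}\big)$. A path that selects, for each $i$, the same global atom (i.e.\ some fixed column of $\bD^{g*}$ possibly with sign flips absorbed by $d_2$) has length $0$. I claim this is the \emph{unique} kind of zero-length path: a zero-length path forces $d_2$-consecutive atoms to be identical up to sign, hence all selected atoms equal a common unit vector $\vv$ up to sign; if any selected atom were local, then $\vv$ would be within distance $0\le\beta$ of a local atom of \emph{every} client, contradicting $\beta$-identifiability. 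So every zero-length path traverses only global atoms, and conversely each of the $r^g$ global atoms yields such a path. Running the shortest-path routine $r^g$ times therefore peels off exactly the $r^g$ global atoms (in some order, with signs), the averaging step in Algorithm~\ref{alg: shortest path} returns $\bD^{g*}$ exactly (the sign-alignment via $\mathrm{sign}\langle(\bD_i)_{(\alpha_i)},(\bD_1)_{(\alpha_1)}\rangle$ makes the average well-defined), and the leftover columns are precisely $\bD^{l*}_i$.

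The substantive part is the robust case. Fix the optimal permutations $\bPi^*_i$ attaining $d_{1,2}(\bD^{(0)}_i,\bD^*_i)=\epsilon_i$; working in these coordinates, each initial atom is within column-$\ell_2$-distance $\epsilon_i$ of the corresponding true atom, and $\sum_i\epsilon_i$ is controlled. By the triangle inequality for $d_2$, every edge weight in $\mathcal{G}$ deviates from its ideal value by at most $\epsilon_i+\epsilon_{i+1}$, so any $s$–$t$ path length deviates from its ideal value by at most $2\sum_i\epsilon_i$. A path through a consistent choice of one global atom now has length $\le 2\sum_i\epsilon_i$. Meanwhile, I must show that any path using at least one \emph{local} atom is strictly longer, so that the shortest path still tracks global atoms; and, after a global atom is peeled, that the remaining graph still behaves correctly. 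For the separation I would argue: if a path uses a local atom at some layer $i_0$, then by $\beta$-identifiability applied to the true local atom nearest the chosen initial local atom, there is a client $i_1$ all of whose local atoms are $d_2$-distance $>\beta$ from it; chaining triangle inequalities across the path from layer $i_0$ to layer $i_1$ forces a total ideal length $>\beta - (\text{accumulated }\epsilon\text{'s})$, and after perturbation the path length exceeds $\beta - 4\sum_i\epsilon_i \ge 0$; combined with the hypothesis $4\sum_i\epsilon_i\le\min\{\sqrt{2-2\mu/\sqrt d},\beta\}$ this stays above $2\sum_i\epsilon_i$ (here is where the incoherency gap is also used, to keep distinct \emph{global} atoms from collapsing onto one another so that successive rounds pick \emph{different} global atoms rather than re-picking one). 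Once the algorithm has selected, at round $j$, the initial atoms $(\bD^{(0)}_i)_{(\alpha_i)}$ all corresponding (up to sign) to the same true global atom $(\bD^{g*})_{(j)}$, the averaging step gives $\bD^{g,(0)}_{(j)} = \frac1N\sum_i \pm(\bD^{(0)}_i)_{(\alpha_i)}$, whose distance to $(\bD^{g*})_{(j)}$ is $\le \frac1N\sum_i\epsilon_i$ by the triangle inequality (sign-aligned); taking the max over $j$ and noting column-$\ell_2$ errors combine into an $\ell_{1,2}$ bound gives $d_{1,2}(\bD^{g,(0)},\bD^{g*})\le\frac1N\sum_i\epsilon_i$. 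The leftover columns for client $i$ are within $\epsilon_i$ of $\bD^{l*}_i$ in $\ell_{1,2}$, giving the second bound.

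The main obstacle I anticipate is the combinatorial bookkeeping in the separation step across rounds: I must rule out not only that a single shortest path picks a local atom, but that after removing $j-1$ already-matched global atoms the graph cannot develop a spuriously short path mixing a local atom of one client with global atoms of others, or re-using a global atom via a sign flip. The incoherency assumption is what prevents two distinct global atoms from being confusable, and $\beta$-identifiability is what prevents a local atom from masquerading as global; making the chained triangle-inequality estimate tight enough that the constant $4\sum_i\epsilon_i$ (rather than something larger) suffices is the delicate accounting. I would organize this as an induction on the round index $j$, maintaining the invariant that after round $j$ the matched atoms are exactly $j$ distinct true global atoms (up to sign) across all clients.
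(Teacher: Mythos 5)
Your overall strategy is the same as the paper's: align each $\bD_i^{(0)}$ with $\bD_i^*$ via the optimal signed permutation, show that the $r^g$ ``consistent global'' paths have length at most $2\sum_i\epsilon_i$, show every other $s$--$t$ path is at least that long, and then get the $\frac1N\sum_i\epsilon_i$ bound from the sign-aligned averaging step. The upper bound, the edge-perturbation accounting, and the final averaging inequality are all handled correctly.

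The gap is in the separation step, specifically in how you lower-bound paths that are neither consistently global nor all-local. Your argument for ``any path using a local atom'' goes: take the local atom $\vv$ chosen at layer $i_0$, invoke $\beta$-identifiability to get a client $i_1$ \emph{all of whose local atoms} are at $d_2$-distance more than $\beta$ from $\vv$, and chain triangle inequalities from layer $i_0$ to layer $i_1$. But this chain only produces a length of order $\beta$ if the atom the path selects at layer $i_1$ is itself \emph{local}; Assumption~\ref{def: identifiablity} says nothing about the distance from $\vv$ to the \emph{global} atoms of client $i_1$, so if the path picks a global atom there, your bound collapses. The same problem afflicts paths that use two \emph{different} global atoms at two layers: you mention incoherency only in connection with ``successive rounds re-picking an atom,'' which is actually impossible (used nodes are deleted), whereas the place incoherency is genuinely needed is within a single round, to rule out these mixed or inconsistent paths. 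The paper's case split is the clean way out: if some layer $i$ selects a global index $\alpha(i)\le r^g$ and some layer $k$ selects $\alpha(k)\ne\alpha(i)$, then $(\bD^*_k)_{(\alpha(i))}$ and $(\bD^*_k)_{(\alpha(k))}$ are two \emph{distinct atoms of the same dictionary} $\bD^*_k$ (the global atom $(\bD^{g*})_{(\alpha(i))}$ is an atom of every client), so $\mu$-incoherency gives $d_2\ge\sqrt{2-2\mu/\sqrt d}$ between them, and the chained triangle inequality along the path finishes the bound; $\beta$-identifiability is reserved exclusively for the remaining case where \emph{every} layer selects a local index. Restructuring your two cases as ``contains at least one global index (but is inconsistent)'' versus ``all indices local'' closes the gap; as written, the mixed case is not covered by either of your tools.
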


According to the above theorem, \texttt{global\_matching} can robustly separate the clients' initial dictionaries into global and local parts, provided that the aggregated error in the initial dictionaries is below a threshold. Specific initialization schemes that can satisfy the condition of Theorem~\ref{theorem: shortest path} include Algorithm 1 from \cite{agarwal2013exact} and Algorithm 3 from \cite{arora2015simple}. We also remark that since the constructed graph is a DAG, the shortest path problem can be solved in time linear in the number of edges, which is $\mathcal{O}\left(r_1+r_N+\sum_{i=1}^{N-1}r_ir_{i+1}\right)$, via a
simple labeling algorithm (see, e.g.,~\citep[Chapter 4.4]{ahuja1988network}). Since we need to solve the shortest path problem $r^g$ times, this brings the computational complexity of Algorithm~\ref{alg: shortest path} to $\mathcal{O}(r^gNr_{\max}^2)$, where $r_{\max} = \max_i r_i$.

Given the initial local and global dictionaries, the clients progressively refine their estimates by applying $T$ rounds of \texttt{local\_update} (Algorithm~\ref{alg: local atom separation}). 
At a high level, each client runs a single iteration of a \textit{linearly convergent algorithm} $\mathcal{A}_i$ (see Definition~\ref{def_lin_alg}), followed by an alignment step that determines the global atoms of the updated dictionary using $\bD^{g,(t)}$ as a "reference point". Notably, our implementation of \texttt{local\_update} is adaptive to different DL algorithms. This flexibility is indeed intentional to provide a versatile meta-algorithm for clients with different DL algorithms.

\begin{algorithm}
    \caption{\texttt{local\_update}}
    \label{alg: local atom separation}
    \begin{algorithmic}[1]
        \STATE{{\bf Input:} $\bD^{(t)}_i = \left[\bD^{g,(t)}\quad \bD^{l,(t)}_i\right], \bY_i$}
        \STATE{$\bD^{(t+1)}_i = \mathcal{A}_i\left(\bY_i, \bD^{(t)}_i\right)$} $\hfill\color{cyan} \texttt{// One iteration of a linearly-convergent algorithm.}$
        \STATE{Initialize $\mathcal{S}$ as an empty set and $\bP \in\R^{r^g\times r^g}$ as an all-zero matrix.}
        \FOR{$j=1,...,r^g$}
            \STATE{Find $k^*=\arg\min_k d_2\bbracket{\bbracket{\bD^{g,(t)}}_{(j)},\bbracket{\bD^{(t+1)}_i}_{(k)}}$}.
            \STATE{Append $k^*$ to $\mathcal{S}$.}
            \STATE{Set $(i,i)$-th entry of $\bP$ to $\mathrm{sign}\bbracket{\left\langle\bbracket{\bD^{g,(t)}}_{(j)},\bbracket{\bD^{(t+1)}_i}_{(k^*)}\right\rangle}$.}
        \ENDFOR
        \STATE{{\bf Output:} $\bD^{g,(t+1)}_i = \bbracket{\bD^{(t+1)}_i}_{(\mathcal{S})}\bP$ and  $\bD^{l,(t+1)}_i = \bbracket{\bD^{(t+1)}_i}_{([r_i]\setminus\mathcal{S})}$}.
    \end{algorithmic}
\end{algorithm}

\section{Theoretical Guarantees}\label{sec: theory}
In this section, we show that our proposed meta-algorithm provably solves $\mathsf{PerDL}$ under suitable initialization, identifiability, and algorithmic conditions. To achieve this goal, we first present the definition of a linearly-convergent DL algorithm.
\begin{definition}\label{def_lin_alg}
    Given a generative model $\rmY=\rmD^*\rmX^*$, a DL algorithm $\mathcal{A}$ is called { ${(\delta,\rho,\psi)}$-linearly convergent} for some parameters $\delta, \psi>0$ and $0<\rho<1$ if, for any $\bD\in \R^{d\times r}$ such that $d_{1,2}(\bD,\bD^*)\le \delta$, the output of one iteration $\bD^+=\mathcal{A}(\bD,\rmY)$, satisfies 
    \begin{align}
        d_{2,(j)}\bbracket{\bD^+,\bD^*}\le \rho d_{2,(j)}\bbracket{\bD,\bD^*}+\psi,\quad\forall  1\le j\le r.
    \end{align}
\end{definition}

One notable linearly convergent algorithm is introduced by~\cite[Algorithm 5]{arora2015simple}; we will discuss this algorithm in more detail in the appendix. Assuming all clients are equipped with linearly convergent algorithms, our next theorem establishes the convergence of $\mathsf{PerMA}$.

\begin{theorem}[Convergence of $\mathsf{PerMA}$]\label{theorem: convergence PerMA}
Suppose $\{\bD^*_i\}_{i=1}^N$ are $\mu$-incoherent (Assumption~\ref{def: incoherency}) and $\beta$-identifiable (Assumption~\ref{def: identifiablity}). Suppose, for every client $i$, the DL algorithm $\mathcal{A}_i$ used in \texttt{local\_update} (Algorithm~\ref{alg: local atom separation}) is $(\delta_i,\rho_i,\psi_i)$-linearly convergent with $4\sum_{i=1}^N \delta_i \le \min\left\{\sqrt{2-2\frac{\mu}{\sqrt{d}}},\beta\right\}$. Assume the initial dictionaries $\{\bD^{(0)}_i\}_{i=1}^N$ satisfy:
\begin{align}
    d_{1,2}\bbracket{\frac{1}{N}\sum_{i=1}^N\bD^{g,(0)}_i,\bD^{g*}}\le \min_{1\le i \le N} \delta_i, \quad d_{1,2}\bbracket{\bD^{l,(0)}_i,\bD^{l*}_i}\le \delta_i, \quad \text{for}\quad i=1,\dots, N.
\end{align}
Then, for every $t\geq 0$, $\mathsf{PerMA}$ (Algorithm~\ref{alg: general alg}) satisfies
\begin{align}
        d_{1,2}\left(\bD^{g,(t+1)},\bD^{g*}\right) &\le \bbracket{\frac{1}{N}\sum_{i=1}^N\rho_i }d_{1,2}\left(\bD^{g,(t)},\bD^{g*}\right)+{\frac{1}{N}\sum_{i=1}^N\psi_i},
        \label{eq: global convergence}\\
        d_{1,2}\left(\bD^{l,(t+1)}_i,\bD^{l*}_i\right) &\le \rho_i d_{1,2}\left(\bD^{l,(t)},\bD^{l*}_i\right)+{\psi_i},\quad\mathrm{for}\quad 1\le i\le N.\label{eq: local convergence}
    \end{align}
\end{theorem}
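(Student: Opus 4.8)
The plan is to prove the two recursions \eqref{eq: global convergence} and \eqref{eq: local convergence} by induction on $t$, where the induction hypothesis at step $t$ is that all the dictionary estimates are still inside the basins of attraction of the various linearly-convergent algorithms and that the global/local split is correctly maintained. The base case $t=0$ is handled by the initialization assumptions together with Theorem~\ref{theorem: shortest path}: the hypothesis $4\sum_i\delta_i\le\min\{\sqrt{2-2\mu/\sqrt d},\beta\}$ plays exactly the role of the corresponding condition in that theorem, so \texttt{global\_matching} on $\{\bD_i^{(0)}\}$ produces a correctly-separated $(\bD^{g,(0)},\{\bD^{l,(0)}_i\})$ with the stated per-client error bounds. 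First I would fix, once and for all, the signed permutations realizing all the relevant $d_{1,2}$-distances, so that one can talk about a concrete alignment of columns and track column-by-column errors via the $d_{2,(j)}$ quantities from Definition~\ref{def_lin_alg}.

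Next, for the inductive step, I would analyze a single communication round. Each client runs one step of $\mathcal{A}_i$ on $\bD_i^{(t)}=[\bD^{g,(t)}\ \ \bD^{l,(t)}_i]$; by the $(\delta_i,\rho_i,\psi_i)$-linear-convergence property (valid because the inductive hypothesis keeps $d_{1,2}(\bD_i^{(t)},\bD_i^*)\le\delta_i$, which follows from the two inductive error bounds and the triangle inequality applied columnwise), each column error contracts: $d_{2,(j)}(\bD_i^{(t+1)},\bD_i^*)\le\rho_i\,d_{2,(j)}(\bD_i^{(t)},\bD_i^*)+\psi_i$. The key remaining issue is that the alignment step inside \texttt{local\_update} must correctly re-identify which columns of the post-update dictionary $\bD_i^{(t+1)}$ are the global ones, using $\bD^{g,(t)}$ as reference. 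Here I would argue, exactly as in the proof of Theorem~\ref{theorem: shortest path}, that as long as the total perturbation is below the threshold, the nearest-atom matching in $d_2$ picks out the true global atoms: the incoherency assumption (Assumption~\ref{def: incoherency}) guarantees that distinct true atoms are at $d_2$-distance at least $\sqrt{2-2\mu/\sqrt d}$ from one another, while $\beta$-identifiability (Assumption~\ref{def: identifiablity}) guarantees a true local atom is at $d_2$-distance more than $\beta$ from any true global atom, and both gaps dominate the current errors. Consequently the extracted $\bD^{g,(t+1)}_i$ and $\bD^{l,(t+1)}_i$ are the correctly-matched blocks, and they inherit the columnwise contraction bound above.

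Finally, I would assemble the global recursion. After local updates, the server forms $\bD^{g,(t+1)}=\frac1N\sum_i\bD^{g,(t+1)}_i$. Because all the $\bD^{g,(t+1)}_i$ have been aligned to the common reference $\bD^{g,(t)}$ (and hence, inductively, to a common signed permutation of $\bD^{g*}$), averaging is well-defined columnwise and does not mix atoms. For each global column $j$, taking the reference permutation $\bPi^*$ for $d_{1,2}$ and using the triangle inequality for the $\ell_{1,2}$-norm,
\begin{align}
d_{1,2}\!\left(\bD^{g,(t+1)},\bD^{g*}\right)
&\le \frac1N\sum_{i=1}^N \left\|\bD^{g,(t+1)}_i\bPi^* - \bD^{g*}\right\|_{1,2}
\le \frac1N\sum_{i=1}^N\Big(\rho_i\, d_{1,2}\!\left(\bD^{g,(t)},\bD^{g*}\right) + \psi_i\Big),
\end{align}
which is \eqref{eq: global convergence}; and the per-client bound \eqref{eq: local convergence} follows directly from the columnwise contraction applied to the local block, taking a max/sum over the $r^l_i$ local columns. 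I expect the main obstacle to be the alignment-correctness argument in the inductive step — i.e., rigorously showing that the nearest-neighbor matching in \texttt{local\_update} never misidentifies a local atom as global (or vice versa) under the stated error budget, and that the per-step errors never inflate past $\delta_i$ so that linear convergence remains applicable on every round. This requires carefully bookkeeping how $d_{1,2}$, $d_2$, and the columnwise $d_{2,(j)}$ quantities relate (the former being a sum, the latter controlling individual columns), and ensuring the $\delta_i$-budget is respected uniformly in $t$, which it is because $\psi_i/(1-\rho_i)$ stays below $\delta_i$ under the theorem's hypotheses.
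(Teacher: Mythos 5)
Your plan follows essentially the same route as the paper's proof: induction on $t$, columnwise contraction from the $(\delta_i,\rho_i,\psi_i)$-linear-convergence definition, a triangle-inequality argument showing that the nearest-atom matching in \texttt{local\_update} recovers the correct (identity, after WLOG alignment) permutation because the gap $\sqrt{2-2\mu/\sqrt{d}}-2\delta_i$ exceeds the matching error $2\delta_i$, and then averaging columnwise to obtain the $\frac{1}{N}\sum_i\rho_i$ and $\frac{1}{N}\sum_i\psi_i$ recursion. One minor correction: in the within-client alignment step only $\mu$-incoherence is needed (all competing columns belong to the same dictionary $\bD_i^*$), whereas $\beta$-identifiability is used only for the initial \texttt{global\_matching} across clients, so your appeal to it there is superfluous.
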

The above theorem sheds light on a number of key benefits of $\mathsf{PerMA}$:

\paragraph{Relaxed initial condition for weak clients.} Our algorithm relaxes the initial condition on the global dictionaries. In particular, it only requires the average of the initial global dictionaries to be close to the true global dictionary. Consequently, it enjoys a provable convergence guarantee even if some of the clients do not provide a high-quality initial dictionary to the server.

\paragraph{Improved convergence rate for slow clients.} During the course of the algorithm, the global dictionary error decreases at an average rate of $\frac{1}{N}\sum_{i=1}^N\rho_i$, improving upon the convergence rate of the slow clients. 

\paragraph{Improved statistical error for weak clients.} A linearly convergent DL algorithm $\mathcal{A}_i$ stops making progress upon reaching a neighborhood around the true dictionary $\bD_i^*$ with radius $O(\psi_i)$. This type of guarantee is common among DL algorithms \citep{arora2015simple,liang2022simple} and often corresponds to their statistical error. In light of this, $\mathsf{PerMA}$ improves the performance of weak clients (i.e., clients with weak statistical guarantees) by borrowing strength from strong ones.

\section{Numerical Experiments}\label{sec: numerical}
In this section, we showcase the effectiveness of Algorithm~\ref{alg: general alg} using synthetic and real data. All experiments are performed on a MacBook Pro 2021 with the Apple M1 Pro chip and 16GB unified memory for a serial implementation in MATLAB 2022a. Due to limited space, we will only provide high-level motivation and implication of our experiments and defer implementation details to the appendix. 

\begin{figure}
        \centering
        \includegraphics[width=0.7\textwidth]{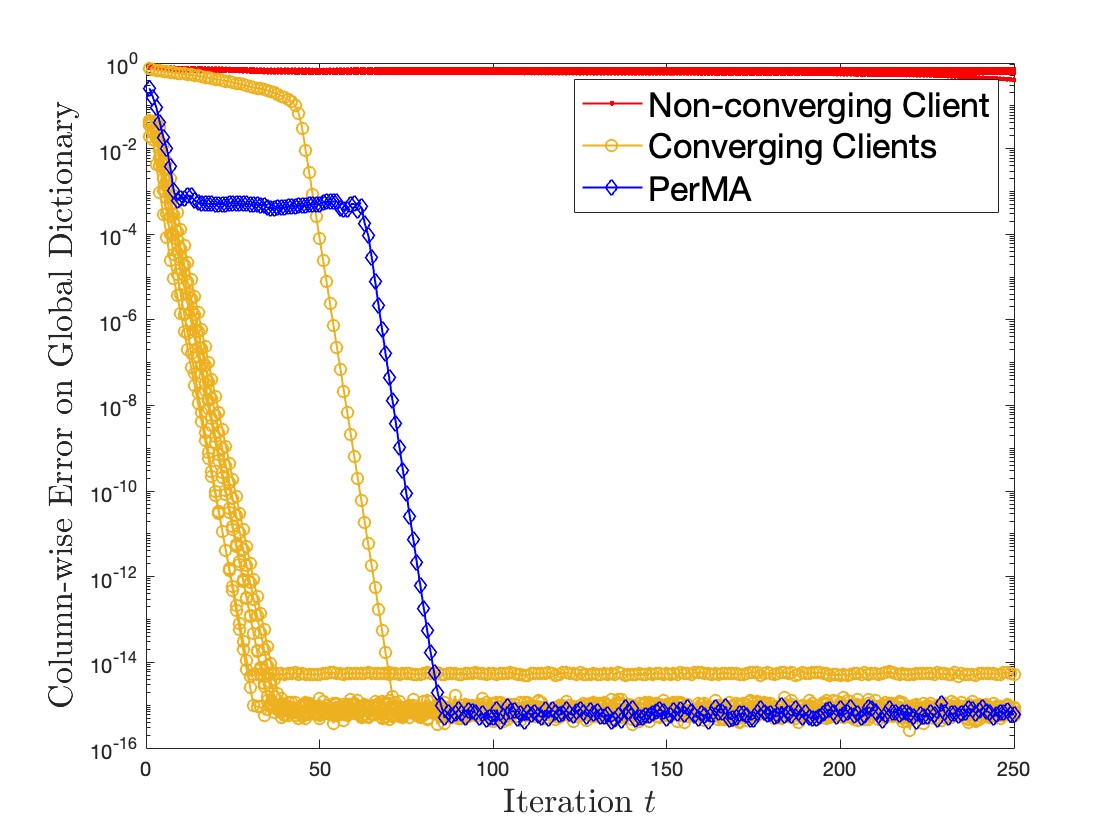}
        \caption{ $\mathsf{PerMA}$ improves the accuracy of the recovered global dictionary for \textit{all} clients, even if some (three out of ten) are weak learners.}
        \label{fig:synthetic}
\end{figure}

\subsection{Synthetic Dataset}\label{sec: synthetic dataset}
In this section, we validate our theoretical results on a synthetic dataset. We consider ten clients, each with a dataset generated according to the model~\ref{model: PerDL}. The details of our construction are presented in the appendix. Specifically, we compare the performances of two strategies: (1) {\it independent strategy}, where each client solves \ref{objective DL} without any collaboration, and (2) {\it collaborative strategy}, where clients collaboratively learn the ground truth dictionaries by solving $\mathsf{PerDL}$ via the proposed meta-algorithm $\mathsf{PerMA}$. We initialize both strategies using the same $\{\bD^{(0)}_i\}_{i=1}^{N}$. The initial dictionaries are obtained via a warm-up method proposed in~\cite[Algorithm 4]{liang2022simple}. For a fair comparison between independent and collaborative strategies, we use the same DL algorithm (\cite[Algorithm 1]{liang2022simple}) for different clients. Note that in the independent strategy, the clients cannot separate global from local dictionaries. Nonetheless, to evaluate their performance, we collect the atoms that best align with the true global dictionary $\bD^{g*}$ and treat them as the estimated global dictionaries. As can be seen in Figure~\ref{fig:synthetic}, three out of ten clients are weak learners and fail to recover the global dictionary with desirable accuracy. On the contrary, in the collaborative strategy, all clients recover the same global dictionary almost exactly.

\subsection{Training with Imbalanced Data}\label{sec: imbalanced data}
In this section, we showcase the application of $\mathsf{PerDL}$ in training with imbalanced datasets. We consider an image reconstruction task on MNIST dataset. This dataset corresponds to a set of handwritten digits (see the first row of Figure~\ref{fig:mnist}). The goal is to recover a \textit{single} concise global dictionary that can be used to reconstruct the original handwritten digits as accurately as possible. 
In particular, we study a setting where the clients have \textit{imbalanced label distributions}. Indeed, data imbalance can drastically bias the performance of the trained model in favor of the majority groups, while hurting its performance on the minority groups~\citep{leevy2018survey, thabtah2020data}. Here, we consider a setting where the clients have highly imbalanced datasets, where $90\%$ of their samples have the same label. More specifically,  for client $i$, we assume that $90\%$ of the samples correspond to the handwritten digit ``$i$'', with the remaining $10\%$ corresponding to other digits. The second row of Figure~\ref{fig:mnist} shows the effect of data imbalance on the performance of the recovered dictionary on a single client, when the clients do not collaborate. The last row of Figure~\ref{fig:mnist} shows the improved performance of the recovered dictionary via $\mathsf{PerDL}$ on the same client. Our experiment clearly shows the ability of $\mathsf{PerDL}$ to effectively address the data imbalance issue by combining the strengths of different clients.

\begin{figure}
    \centering
        \includegraphics[width=\textwidth]{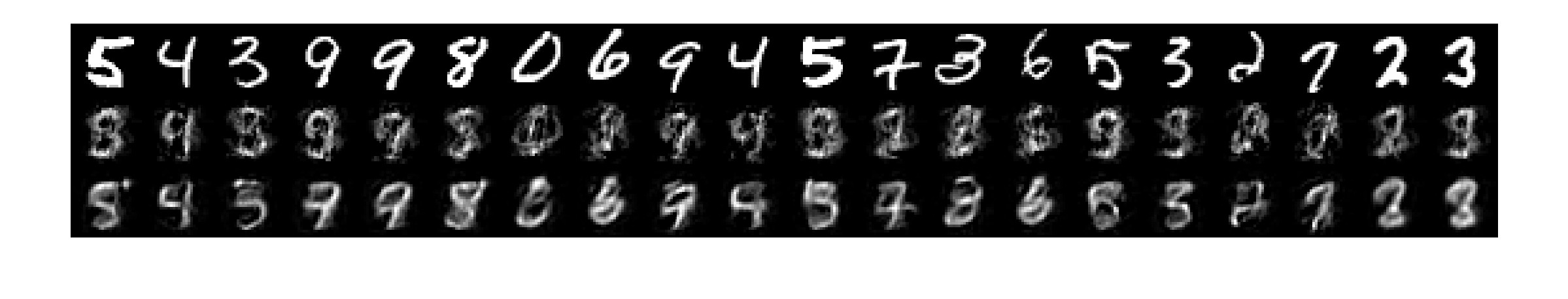}
        \caption{ $\mathsf{PerMA}$ improves training with imbalanced datasets. We consider the image reconstruction task on the imbalanced MNIST dataset using only five atoms from a learned global dictionary. The first row corresponds to the original images. The second row is based on the dictionary learned on a single client with an imbalanced dataset. The third row shows the improved performance of the learned dictionary using our proposed method on the same client.}
        \label{fig:mnist}
\end{figure}
\subsection{Surveillance Video Dataset}\label{sec: surveillance dataset}
As a proof of concept, we consider a video surveillance task, where the goal is to separate the background from moving objects. Our data is collected from~\cite{cuevas2016labeled} (see the first column of Figure~\ref{fig:car1}). As these frames are taken from one surveillance camera, they share the same background corresponding to the global features we aim to extract. The frames also exhibit heterogeneity as moving objects therein are different from the background. This problem can indeed be modeled as an instance of $\mathsf{PerDL}$, where each video frame can be assigned to a ``client'', with the global dictionary capturing the background and local dictionaries modeling the moving objects. We solve $\mathsf{PerDL}$ by applying $\mathsf{PerMA}$ to obtain a global dictionary and several local dictionaries for this dataset. Figure~\ref{fig:car1} shows the reconstructed background and moving objects via the recovered global and local dictionaries. Our results clearly show the ability of our proposed framework to separate global and local features. \footnote{We note that moving object detection in video frames has been extensively studied in the literature and typically solved very accurately via different representation learning methods (such as robust PCA and neural network modeling); see~\citep{yazdi2018new} for a recent survey. Here, we use this case study as a proof of concept to illustrate the versatility of $\mathsf{PerMA}$ in handling heterogeneity, even in settings where the data is not physically distributed among clients.}

\begin{figure}
    \centering
        \includegraphics[width=1\textwidth]{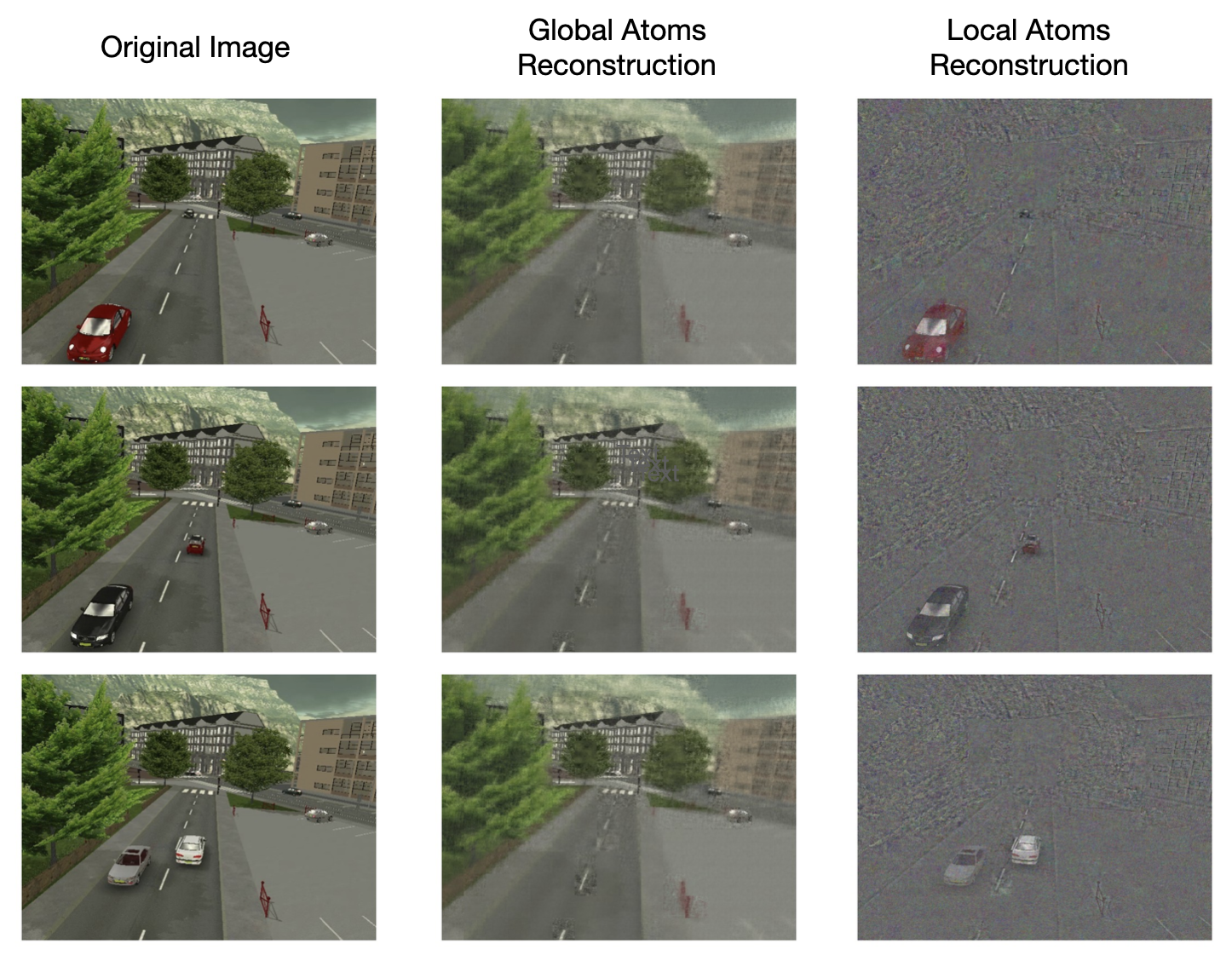}
        \caption{$\mathsf{PerMA}$ effectively separates the background from moving objects in video frames. 
        Here we reconstruct the surveillance video frames using global and local dictionaries learned by $\mathsf{PerMA}$. We reconstruct the frames using only $50$ atoms from the combined dictionaries. }
        \label{fig:car1}
\end{figure}

\section{Conclusion and Future Directions}
In this work, we introduce the problem of \textit{Personalized Dictionary Learning} (PerDL), where the goal is to learn global and local dictionaries for heterogeneous datasets. One promising direction for future research is to relax the requirement on the linear convergence of the DL algorithm (Definition~\ref{def_lin_alg}). Moreover, it would be worthwhile to investigate whether the convergence and statistical guarantees of our proposed algorithm can be improved beyond Theorem~\ref{theorem: convergence PerMA}.

\section*{Acknowledgements}
SF is supported, in part, by NSF Award DMS-2152776, ONR Award N00014-22-1-2127, and MICDE Catalyst Grant. RA is supported, in part, by NSF CAREER Award CMMI-2144147.

\bibliography{ref}
\bibliographystyle{apalike}
\clearpage
\appendix

\section{Further Details on the Experiments}
In this section, we provide further details on the numerical experiments reported in Section~\ref{sec: numerical}.
\subsection{Details of Section~\ref{sec: synthetic dataset}}
In Section~\ref{sec: synthetic dataset}, we generate the synthetic datasets according to Model~\ref{model: PerDL} with $N=10$, $d=6$, $r_i=6$, and $n_i=200$ for each $1\le i\le 10$. Each $\bD^*_i$ is an orthogonal matrix with the first $r^g=3$ columns shared with every other client and the last $r^l_i=3$ columns unique to themselves. Each $\bX^*_i$ is first generated from a Gaussian-Bernoulli distribution where each entry is non-zero with a probability $0.2$. Then, $\bX^*_i$ is further truncated, where all the entries $\bbracket{\bX^*_i}_{(j,k)}$ with $|\bbracket{\bX^*_i}_{(j,k)}|< 0.3$ are replaced by $\bbracket{\bX^*_i}_{(j,k)} = 0.3\times\mathrm{sign}(\bbracket{\bX^*_i}_{(j,k)})$. \par

We use the orthogonal DL algorithm (Algorithm~\ref{alg: ortho offline}) introduced in~\cite[Algorithm 1]{liang2022simple} as the local DL algorithm for each client. This algorithm is simple to implement and comes equipped with a strong convergence guarantee (see~\cite[Theorem 1]{liang2022simple}).
Here $\mathrm{HT}_\zeta(\cdot)$ denotes the hard-thresholding operator at level $\zeta$, which is defined as:
\[
    \left(\mathrm{HT}_\zeta(\bA)\right)_{(i,j)}=
    \begin{cases}
    \bA_{(i,j)}&\text{if}\quad |\bA_{(i,j)}|\ge \zeta,\\
    0&\text{if}\quad |\bA_{(i,j)}|<\zeta.\\
    \end{cases}
\]
Specifically, we use $\zeta = 0.15$ for the experiments in Section~\ref{sec: synthetic dataset}. $\mathrm{Polar}(\cdot)$ denotes the polar decomposition operater, which is defined as $\mathrm{Polar}(\bA)=\bU_{\bA} \bV_{\bA}^\top$, where $\bU_{\bA} \bSigma_{\bA} \bV_{\bA}^\top$ is the Singular Value Decomposition (SVD) of $\bA$.
\begin{algorithm}[H]
	\caption{Alternating minimization for orthogonal dictionary learning (\cite{liang2022simple})}
	\label{alg: ortho offline}
	\begin{algorithmic}[1]
		\STATE{{\bf Input:} $\bY_i$, $\bD^{(t)}_i$}
		\STATE{Set $\bX^{(t)}_i = \mathrm{HT}_{\zeta}\left(\bD^{(t)_i\top}\bY_i\right)$}\label{step: hard thresholding}
		\STATE{Set $\bD^{(t+1)}_i = \mathrm{Polar}\left(\bY_i \bX^{(t)\top}_i\right)$}
		\RETURN{$\bD^{(t+1)}_i$}
	\end{algorithmic}
\end{algorithm}
For a fair comparison, we initialize both strategies using the same $\{\bD^{(0)}_i\}_{i=1}^{N}$, which is obtained by iteratively calling Algorithm~\ref{alg: ortho offline} with a random initial dictionary and  shrinking thresholds $\zeta$. For a detailed discussion on such an initialization scheme we refer the reader to \cite{liang2022simple}. 
\subsection{Details of Section~\ref{sec: imbalanced data}}
In section~\ref{sec: imbalanced data}, we aim to learn a dictionary with imbalanced data collected from MNIST dataset \citep{lecun2010mnist}. Specifically, we consider $N=10$ clients, each with $500$ handwritten images. Each image is comprised of $28\times 28$ pixels. Instead of randomly assigning images, we construct dataset $i$ such that it contains $450$ images of digit $i$ and $50$ images of other digits. Here client $10$ corresponds to digit $0$. After vectorizing each image into a $784\times 1$ one-dimension signal, our imbalanced dataset contains $10$ matrices $\bY_i\in\R^{ 784\times 500}, i=1,\dots, 10$.\par
We first use Algorithm~\ref{alg: ortho offline} to learn an orthogonal dictionary for each client, using their own imbalanced dataset.  For client $i$, given the output of Algorithm~\ref{alg: ortho offline} after $T$ iterations $\bD^{(T)}_i$, we reconstruct a new signal $\by$ using the top $k$ atoms according to the following steps: first, we solve a \textit{sparse coding} problem to find the sparse code $\bx$ such that $\by\approx \bD^{(T)}_i\bx$. This can be achieved by Step~\ref{step: hard thresholding} in Algorithm~\ref{alg: ortho offline}. Second, we find the top $k$ entries in $\bx$ that have the largest magnitude: $\bx_{(\alpha_1,1)}$, $\bx_{(\alpha_2,1)},\cdots,\bx_{(\alpha_k,1)}$. Finally, we calculate the reconstructed signal $\Tilde{\by}$ as
\[
\Tilde{\by} = \sum_{j=1}^k \bx_{(\alpha_h,1)}\bbracket{\bD^{(T)}_i}_{\alpha_h}.
\]
The second row of Figure~\ref{fig:mnist} is generated by the above procedure with $k=5$ using the dictionary learned by Client $1$. 
The third row of Figure~\ref{fig:mnist} corresponds to the reconstructed images using the output of $\mathsf{PerMA}$.

\subsection{Details of Section~\ref{sec: surveillance dataset}}
Our considered dataset in section~\ref{sec: surveillance dataset} contains 62 frames, each of which is a $480\times 640\times 3$ RGB image. We consider each frame as one client ($N=62$). After dividing each frame into $40\times 40$ patches, we obtain each data matrix $\bY_i\in\R^{576\times1600}$. Then we apply $\mathsf{PerMA}$ to $\{\bY_i\}_{i=1}^{62}$ with $r_i= 576$ for all $i$ and $r^g = 30$. Consider $\bD^{(T)}_i=\left[\bD^{g,(T)}\quad \bD^{l,(T)}_i\right]$, which is the output of $\mathsf{PerMA}$ for client $i$. We reconstruct each $\bY_i$ using the procedure described in the previous section with $k=50$. Specifically, we separate the contribution of $\bD^{g,(T)}$ from $\bD^{l,(T)}_i$. Consider the reconstructed matrix $\Tilde{Y}_i$ as 
\[
    \Tilde{\bY}_i = \begin{bmatrix}
        \bD^{g,(T)}&\bD^{l,(T)}_i
    \end{bmatrix}
    \begin{bmatrix}
        \bX^{g}_i\\\bX^{l}_i
    \end{bmatrix}
    = \underbrace{\bD^{g,(T)}\bX^{g}_i}_{\Tilde{\bY}_i^g} + \underbrace{\bD^{l,(T)}_i\bX^{l}_i}_{\Tilde{\bY}_i^l}
\]
The second column and the third column of Figure~\ref{fig:car1} correspond to reconstructed results of $\Tilde{\bY}_i^g$ and $\Tilde{\bY}_i^l$ respectively. We can see clear separation of the background (which is shared among all frames) from the moving objects (which is unique to each frame).\par
One notable difference between this experiment and the previous one is in the choice of the DL algorithm $\mathcal{A}_i$. To provide more flexibility, we relax the orthogonality condition for the dictionary. Therefore, we use the alternating minimization algorithm introduced in~\cite{arora2015simple} for each client (see Algorithm~\ref{alg: over offline}). The main difference between this algorithm and Algorithm~\ref{alg: ortho offline} is that the polar decomposition step in Algorithm~\ref{alg: ortho offline} is replaced by a single iteration of the gradient descent applied to the loss function $\mathcal{L}(\bD,\bX) = \|\bD\bX-\bY\|^2_F$.
\par 
\begin{algorithm}[H]
	\caption{Alternating minimization for general dictionary learning (\cite{arora2015simple})}
	\label{alg: over offline}
	\begin{algorithmic}[1]
		\STATE{{\bf Input:} $\bY_i$, $\bD^{(t)}_i$}
		\STATE{Set $\bX^{(t)}_i = \mathrm{HT}_{\zeta}\left(\bD^{(t)\top}_i\bY_i\right)$}\label{step: hard thresholding 2}
		\STATE{Set $\bD^{(t+1)}_i = \bD^{(t)}_i-2\eta \bbracket{\bD^{(t)}_i\bX^{(t)}_i-\bY_i}\bX^{(t)\top}_i$}
		\RETURN{$\bD^{(t+1)}_i$}
	\end{algorithmic}
\end{algorithm}
Even with the computational saving brought up by Algorithm~\ref{alg: over offline}, the runtime significantly slows down for $\mathsf{PerMA}$ due to large $N$, $d$, and $p$. Here we report a practical trick to speed up $\mathsf{PerMA}$, which is a local refinement procedure (Algorithm~\ref{alg: local dic refinement}) added immediately before \texttt{local\_update} (Step 10 of Algorithm~\ref{alg: general alg}). At a high level, \texttt{local\_dictionary\_refinement} first finds the local residual data matrix $\bY^{l}_i$ by removing the contribution of the global dictionary. Then it iteratively refines the local dictionary with respect to $\bY^{l}_i$. We observed that \texttt{local\_dictionary\_refinement} significantly improves the local reconstruction quality. We leave its theoretical analysis as a possible direction for future work.
\begin{algorithm}[H]
	\caption{\texttt{local\_dictionary\_refinement}}
	\label{alg: local dic refinement}
	\begin{algorithmic}[1]
	\STATE{{\bf Input:} $\bD^{(t)}_i = \left[\bD^{g,(t)}\quad \bD^{l,(t)}_i\right], \bY_i$}
	\STATE{Find $\begin{bmatrix}
        \bX^{g}_i\\\bX^{l}_i
    \end{bmatrix}$ such that $\bY_i\approx \left[\bD^{g,(t)}\quad \bD^{l,(t)}_i\right]\begin{bmatrix}
        \bX^{g}_i\\\bX^{l}_i
    \end{bmatrix}$\\$\hfill\color{cyan} \texttt{// Solving a sparse coding problem}$}
	\STATE{Set $\bY^{l}_i = \bY_i - \bD^{g,(t)}\bX^{g}_i$}
        \STATE{Set $\bD^{\mathrm{refine},(0)}_i = \bD^{l,(t)}_i$.}
        \FOR{$\tau=0,1,...,T^{\mathrm{refine}}-1$}
            \STATE{Set $\bD^{\mathrm{refine},(\tau+1)}_i = \mathcal{A}_i\bbracket{\bY^{l}_i,\bD^{\mathrm{refine},(\tau)}_i}$ $\hfill\color{cyan} \texttt{// Improving local dictionary}$}
        \ENDFOR
	\RETURN{$\bD^{\mathrm{refine},(T^{\mathrm{refine}})}_i$ as refined $\bD^{l,(t)}_i$}
	\end{algorithmic}
\end{algorithm}

\section{Further Discussion on Linearly Convergent Algorithms}
In this section, we discuss a linearly convergent DL algorithm that satisfies the conditions of our Theorem~\ref{theorem: convergence PerMA}. In particular, the next theorem is adapted from \cite[Theorem 12]{arora2015simple} and shows that a modified variant of Algorithm~\ref{alg: over offline} introduced in \cite[Algorithm 5]{arora2015simple} is indeed linearly-convergent.

\begin{theorem}[Linear convergence of Algorithm 5 in \cite{arora2015simple}]
\label{theorem: convergence without collab} Suppose that the data matrix satisfies 
$\bY = \bD^*\bX^*$, where $\bD^*$ is an  $\mu$-incoherent dictionary and the sparse code $\bX^*$ satisfies the generative model introduced in Section 1.2 and Section 4.1 of \cite{arora2015simple}. For any initial dictionary $\|\bD^{(0)}\|_2\leq 1$, Algorithm 5 in \cite{arora2015simple} is $(\delta,\rho,\psi)$-linearly convergent with $\delta = O(1/\log d)$, $\rho\in(1/2,1)$, and $\psi = O(d^{-\omega(1)})$.
\end{theorem}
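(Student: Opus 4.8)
The plan is to derive this statement as a direct consequence of \cite[Theorem 12]{arora2015simple}, translating their convergence guarantee into the $(\delta,\rho,\psi)$-linear convergence language of Definition~\ref{def_lin_alg}. First I would recall the precise setup in \cite{arora2015simple}: the generative model assumptions (Section 1.2 and Section 4.1 there) guarantee that the support of each column of $\bX^*$ is drawn so that each atom appears with probability $\Theta(k/r)$, the nonzero entries are bounded away from zero and $O(1)$ in magnitude, the dictionary is $\mu$-incoherent with $\mu = O(\log d)$ and has bounded spectral norm, and the sparsity satisfies $k = O(\sqrt{d}/(\mu \operatorname{polylog} d))$. Under these hypotheses, \cite[Theorem 12]{arora2015simple} states that if the current estimate $\bD$ satisfies $\|\bD_{(j)} - \bD^*_{(j)}\|_2 \le \delta$ for all $j$ with $\delta = O(1/\log d)$ (i.e., $\bD$ is ``$(\delta,2)$-near'' to $\bD^*$ in their terminology), then one step of their Algorithm 5 produces $\bD^+$ with $\|\bD^+_{(j)} - \bD^*_{(j)}\|_2^2 \le (1 - \kappa)\|\bD_{(j)} - \bD^*_{(j)}\|_2^2 + O(k/r \cdot \text{(small term)})$ for some constant $\kappa \in (0,1)$, columnwise, with high probability.

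The second step is a bookkeeping translation. The columnwise contraction in squared $\ell_2$-distance yields a contraction in $\ell_2$-distance itself: from $a_{j}^+ \le \sqrt{(1-\kappa)} a_j + \psi'$ (using $\sqrt{x+y}\le\sqrt{x}+\sqrt{y}$ on the recursion for $a_j = \|\bD_{(j)}-\bD^*_{(j)}\|_2$), we may set $\rho = \sqrt{1-\kappa} \in (1/2,1)$ — shrinking $\kappa$ if necessary, which only weakens the bound — and $\psi = O(d^{-\omega(1)})$, since the additive error term in \cite{arora2015simple} is superpolynomially small in $d$ under their parameter regime. The only subtlety is matching the distance metrics: Definition~\ref{def_lin_alg} is phrased in terms of $d_{2,(j)}(\bD^+,\bD^*)$ and $d_{1,2}(\bD,\bD^*)$, which involve a minimization over signed permutations $\bPi \in \mathcal{P}(r)$. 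I would argue that since the initialization in \cite{arora2015simple} already fixes the correspondence between columns of $\bD$ and $\bD^*$ (their Algorithm 4 / the $(\delta,2)$-nearness is stated after a permutation has been pinned down), one may assume without loss of generality that the identity permutation is optimal for $\bD^{(0)}$, and then an inductive argument shows the optimal permutation stays the identity throughout, because the per-column errors remain below $\delta < 1/2 \le \sqrt{2 - 2\mu/\sqrt{d}}$, so no two columns can ``swap'' without increasing the distance. This makes $d_{2,(j)}(\bD^+,\bD^*) = \|\bD^+_{(j)} - \bD^*_{(j)}\|_2$ and $d_{1,2}(\bD,\bD^*) = \max_j \|\bD_{(j)} - \bD^*_{(j)}\|_2$, so the contraction transfers verbatim.

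The main obstacle I anticipate is not the contraction estimate itself — that is entirely imported from \cite{arora2015simple} — but rather carefully verifying that the ``$(\delta,2)$-nearness'' notion used there is equivalent (for the purposes of one update step) to the hypothesis $d_{1,2}(\bD,\bD^*)\le\delta$ in Definition~\ref{def_lin_alg}, and that the whp statement can be made uniform over the iterations invoked by $\mathsf{PerMA}$. Concretely, $\mathsf{PerMA}$ applies $\mathcal{A}_i$ for $T$ rounds, so I would either (i) union-bound the failure probability over $T$ steps, absorbing a factor of $T$ into the $d^{-\omega(1)}$ slack, or (ii) note that \cite{arora2015simple} already proves a multi-step version (their full recovery theorem, Theorem 14 or similar) and cite that. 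I would also need to confirm that the condition $\|\bD^{(0)}\|_2 \le 1$ stated in our theorem suffices to enter their basin of attraction — this follows because their near-ness condition is columnwise and the spectral-norm bound is only used to control cross terms in the expansion, so any $\bD^{(0)}$ with bounded spectral norm and small columnwise error qualifies. A full write-up would make the parameter dependencies ($k$, $r$, $\mu$, $\kappa$) explicit and state the precise probability $1 - d^{-\omega(1)}$, but the argument is otherwise a faithful restatement of the cited result.
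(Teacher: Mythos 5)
Your proposal is correct and follows essentially the same route as the paper, which states this theorem as a direct adaptation of \cite[Theorem 12]{arora2015simple} and provides no further proof beyond that citation. Your translation of their squared-distance, ``$(\delta,2)$-near'' columnwise contraction into the $(\delta,\rho,\psi)$-language of Definition~\ref{def_lin_alg} (including the $\rho=\sqrt{1-\kappa}$ bookkeeping and the permutation-pinning argument) is a faithful and somewhat more careful elaboration of exactly what the paper leaves implicit.
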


Algorithm 5 in \cite{arora2015simple} is a refinement of Algorithm~\ref{alg: over offline}, where the error is further reduced by projecting out the components along the column currently being updated. For brevity, we do not discuss the exact implementation of the algorithm; an interested reader may refer to \cite{arora2015simple} for more details. Indeed, we have observed in our experiments that the additional projection step does not provide a significant benefit over Algorithm~\ref{alg: over offline}.

\section{Proofs of Theorems}
\subsection{Proof of Theorem~\ref{theorem: shortest path}}
To begin with, we establish a triangular inequality for $d_{1,2}(\cdot,\cdot)$, which will be important in our subsequent arguments:
\begin{lemma}[Triangular inequality for $d_{1,2}(\cdot,\cdot)$]\label{lemma: triangle for tilde d}
    For any dictionary $\bD_1$, $\bD_2$, $\bD_3\in\R^{d\times r}$, we have
    \begin{equation}
        d_{1,2}\bbracket{\D_1,\D_2}\le d_{1,2}\bbracket{\D_1,\D_3}+d_{1,2}\bbracket{\D_3,\D_2}
    \end{equation}
\end{lemma}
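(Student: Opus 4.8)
The plan is to exploit two facts: $\mathcal{P}(r)$ is a group under matrix multiplication, and the $\ell_{1,2}$-norm is invariant under right multiplication by a signed permutation matrix. First I would let $\bPi_{13}\in\mathcal{P}(r)$ attain the minimum defining $d_{1,2}(\D_1,\D_3)$ and $\bPi_{32}\in\mathcal{P}(r)$ attain the minimum defining $d_{1,2}(\D_3,\D_2)$, so that $\|\D_1\bPi_{13}-\D_3\|_{1,2}=d_{1,2}(\D_1,\D_3)$ and $\|\D_3\bPi_{32}-\D_2\|_{1,2}=d_{1,2}(\D_3,\D_2)$. Since $\bPi_{13}\bPi_{32}\in\mathcal{P}(r)$, it is a feasible choice in the minimization defining $d_{1,2}(\D_1,\D_2)$, hence
\[
d_{1,2}(\D_1,\D_2)\le\|\D_1\bPi_{13}\bPi_{32}-\D_2\|_{1,2}.
\]

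Next I would write the telescoping decomposition $\D_1\bPi_{13}\bPi_{32}-\D_2=(\D_1\bPi_{13}-\D_3)\bPi_{32}+(\D_3\bPi_{32}-\D_2)$ and apply the ordinary triangle inequality for $\|\cdot\|_{1,2}$, which holds because $\|\cdot\|_{1,2}$ is a genuine norm on $\R^{d\times r}$ (it is the $\ell_\infty$-norm of the vector of column-wise $\ell_2$-norms):
\[
\|\D_1\bPi_{13}\bPi_{32}-\D_2\|_{1,2}\le\|(\D_1\bPi_{13}-\D_3)\bPi_{32}\|_{1,2}+\|\D_3\bPi_{32}-\D_2\|_{1,2}.
\]

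The one ingredient worth isolating is the identity $\|\bA\bPi\|_{1,2}=\|\bA\|_{1,2}$ for every $\bA\in\R^{d\times r}$ and $\bPi\in\mathcal{P}(r)$: right multiplication by a signed permutation matrix merely permutes the columns of $\bA$ and negates some of them, which leaves the multiset of column $\ell_2$-norms — and therefore its maximum — unchanged. Taking $\bA=\D_1\bPi_{13}-\D_3$ and $\bPi=\bPi_{32}$ gives $\|(\D_1\bPi_{13}-\D_3)\bPi_{32}\|_{1,2}=\|\D_1\bPi_{13}-\D_3\|_{1,2}=d_{1,2}(\D_1,\D_3)$, and combining the three displays yields the claim. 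I do not anticipate any real obstacle here; the only point requiring care is that all three distances are taken over the same signed-permutation group $\mathcal{P}(r)$, which is exactly what makes the composite $\bPi_{13}\bPi_{32}$ admissible in the final minimization.
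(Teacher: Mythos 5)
Your proof is correct and follows essentially the same route as the paper: pick optimal permutations for the two right-hand distances, use the group structure of $\mathcal{P}(r)$ to produce a feasible permutation for the left-hand distance, and apply the ordinary triangle inequality for $\|\cdot\|_{1,2}$. The only difference is cosmetic --- you make explicit the invariance $\|\bA\bPi\|_{1,2}=\|\bA\|_{1,2}$ that the paper's chain of inequalities uses implicitly, which is a welcome clarification rather than a deviation.
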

\begin{proof}
    Suppose $\bPi_{1,3}$ and $\bPi_{3,2}$ satisfy $d_{1,2}\bbracket{\D_1,\D_3}=\|\bD_1\bPi_{1,3}-\D_3\|_{1,2}$ and $d_{1,2}\bbracket{\D_3,\D_2}=\|\bD_3-\bD_2\bPi_{3,2}\|_{1,2}$. Then we have
    \begin{equation}
        \begin{aligned}
        d_{1,2}\bbracket{\D_1,\D_3} + d_{1,2}\bbracket{\D_3,\D_2}&=
        \|\bD_1\bPi_{1,3}-\D_3\|_{1,2} + \|\bD_3-\bD_2\bPi_{3,2}\|_{1,2}\\
        &\ge \|\bD_1\bPi_{1,3}-\bD_2\bPi_{3,2}\|_{1,2}\\
        &\ge d_{1,2}\bbracket{\D_1,\D_2}.
    \end{aligned}
    \end{equation}
\end{proof}
Given how the directed graph $\mathcal{G}$ is constructed and modified, any directed path from $s$ to $t$ will be of the form $\mathcal{P} = s\rightarrow (\D^{(0)}_1)_{\alpha(1)}\rightarrow (\D^{(0)}_2)_{\alpha(2)}\rightarrow\cdots\rightarrow (\D^{(0)}_N)_{\alpha(N)}\rightarrow t$. Specifically, each layer (or client) contributes exactly one node (or atom), and the path is determined by $\alpha(\cdot):[N]\rightarrow[r]$. 
Recall that $\bD^*_i = \begin{bmatrix}
    \bD^{g*} & \bD^{l*}_i
\end{bmatrix}$ for every $1\leq i\leq N$. Assume, without loss of generality, that for every client $1\le i \le N$, 
\begin{equation}\label{assumption:aligned}
    \bI_{r_i\times r_i}=\arg\min_{\bPi\in\mathcal{P}(r_i)}\left\|\bD^*_i\bPi-\bD^{(0)}_i\right\|_{1,2}.
\end{equation}
In other words, the first $r^g$ atoms in the initial dictionaries $\{D_i^{(0)}\}_{i=1}^N$ are aligned with the global dictionary. Now consider the special path $\mathcal{P}^*_j$ for $1\le j \le r^g $ defined as
\begin{equation}
    \mathcal{P}^*_j = s\rightarrow (\D^{(0)}_1)_j\rightarrow (\D^{(0)}_2)_j\rightarrow\cdots\rightarrow (\D^{(0)}_N)_j\rightarrow t.
\end{equation}
To prove that Algorithm~\ref{alg: shortest path} correctly selects and aligns global atoms from clients, it suffices to show that $\{\mathcal{P}^*_j\}_{j=1}^{r^g}$ are the top-$r^g$ shortest paths from $s$ to $t$ in $\mathcal{G}$. The length of the path $\mathcal{P}^*_j$ can be bounded as
\begin{equation}
\begin{aligned}
    \mathcal{L}\bbracket{\mathcal{P}^*_j}
    &= \sum_{i=1}^{N-1}d_2\left((\D^{(0)}_i)_j,(\D^{(0)}_{i+1})_j\right)\\
    &= \sum_{i=1}^{N-1}\min\left\{\|(\D^{(0)}_i)_j-(\D^{(0)}_{i+1})_j\|_2, \|(\D^{(0)}_i)_j+(\D^{(0)}_{i+1})_j\|_2\right\}\\
    &\le \sum_{i=1}^{N-1}\|(\D^{(0)}_i)_j - (\D^{(0)}_{i+1})_j\|_2\\
    &\le \sum_{i=1}^{N-1}\|(\D^{(0)}_i)_j - (\D^{g*})_j\|_2+\| (\D^{(0)}_{i+1})_j  - (\D^{g*})_j\|_2\\
    &\le \sum_{i=1}^{N-1} (\epsilon_i + \epsilon_{i+1})\\
    &\le 2 \sum_{i=1}^{N} \epsilon_i.
\end{aligned}
\end{equation}
 We move on to prove that all the other paths from $s$ to $t$ will have a distance longer than $2 \sum_{i=1}^{N} \epsilon_i$. Consider a general directed path $\mathcal{P} = s\rightarrow (\D^{(0)}_1)_{\alpha(1)}\rightarrow (\D^{(0)}_2)_{\alpha(2)}\rightarrow\cdots\rightarrow (\D^{(0)}_N)_{\alpha(N)}\rightarrow t$ that is not in $\{\mathcal{P}^*_j\}_{j=1}^{r^g}$. Based on whether or not $\mathcal{P}$ contains atoms that align with the true global ground atoms, there are two situations:\par
{\bf Case 1:} Suppose there exists $1\le i \le N$ such that $\alpha(i)\le r^g$. Given Model~\ref{model: PerDL} and the assumed equality~\eqref{assumption:aligned}, we know that for layer $i$, $\mathcal{P}$ contains a global atom. Since $\mathcal{P}$ is not in $\{\mathcal{P}^*_j\}_{j=1}^{r^g}$, there must exist $k\ne i$ such that $\alpha(k)\ne \alpha(i)$. As a result, we have
\begin{equation}
\begin{aligned}
    \mathcal{L}(\mathcal{P})
    &\overset{(a)}{\ge} d_{1,2}\bbracket{(\D^{(0)}_i)_{\alpha(i)},(\D^{(0)}_k)_{\alpha(k)}} \\
    &\overset{(b)}{\ge} \min\left\{\|(\D^{*}_i)_{\alpha(i)}-(\D^{*}_k)_{\alpha(k)}\|_2, (\D^{*}_i)_{\alpha(i)}+(\D^{*}_k)_{\alpha(k)}\|_2\right\}\\
    &\ \ \ \ \ - \|(\D^{*}_i)_{\alpha(i)}-(\D^{(0)}_i)_{\alpha(i)}\|_2 -  \|(\D^{*}_k)_{\alpha(k)}-(\D^{(0)}_k)_{\alpha(k)}\|_2\\
    &\overset{(c)}{\ge} \sqrt{2-2\left|\bip{(\D^{*}_k)_{\alpha(i)},(\D^{*}_k)_{\alpha(k)}}\right|}-2\max_{1\le i\le N} \epsilon_i\\
    &\overset{(d)}{\ge} \sqrt{2-2\frac{\mu}{\sqrt{d}}}-2\max_{1\le i\le N} \epsilon_i\\
    &\overset{(e)}{\ge} 2 \sum_{i=1}^{N} \epsilon^g_i
\end{aligned}
\end{equation}
Here $(a)$ and $(b)$ are due to Lemma~\ref{lemma: triangle for tilde d}, $(c)$ is due to assumed equality~\eqref{assumption:aligned}, $(d)$ is due to the $\mu$-incoherency of $\bD^*_k$, and finally $(e)$ is given by the assumption of Theorem~\ref{theorem: shortest path}.\par

{\bf Case 2:} Suppose $\alpha(i)> r^g$ for all $1\le i \le N$, which means that the path $\mathcal{P}$ only uses approximations to local atoms. Consider the ground truth of these approximations, $(\D^*_1)_{\alpha(1)},(\D^*_2)_{\alpha(2)},...,(\D^*_N)_{\alpha(N)}$. There must exist $1\le i,j\le N$ such that $d_{1,2}\bbracket{(\D^*_i)_{\alpha(i)},(\D^*_j)_{\alpha(j)}}\ge \beta$. Otherwise, it is easy to see that $\{\bD^{l*}_i\}_{i=1}^N$ would not be $\beta$-identifiable because any $(\D^*_i)_{\alpha(i)}$ will satisfy~\eqref{eq: condition no global candidate}. As a result, we have the following:
\begin{equation}
    \begin{aligned}
    \mathcal{L}(\mathcal{P})
    &\ge d_{1,2}\bbracket{(\D^{(0)}_i)_{\alpha(i)},(\D^{(0)}_j)_{\alpha(j)}}\\
    &\ge d_{1,2}\bbracket{(\D^*_i)_{\alpha(i)},(\D^*_j)_{\alpha(j)}} - \|(\D^*_i)_{\alpha(i)}-(\D^{(0)}_i)_{\alpha(i)}\|_2 - \|(\D^*_j)_{\alpha(j)} - (\D^{(0)}_j)_{\alpha(j)}\|_2\\
    &\ge \beta - 2\max_i \epsilon_i\\
    &\ge 2 \sum_{i=1}^{N} \epsilon_i
\end{aligned}
\end{equation}
So we have shown that $\{\mathcal{P}^*_j\}_{j=1}^{r^g}$ are the top-$r^g$ shortest paths from $s$ to $t$ in $\mathcal{G}$. Moreover, it is easy to show that $\mathrm{sign}\bbracket{\left\langle(\bD^{(0)}_1)_j,(\bD^{(0)}_i)_j\right\rangle} = 1$ for small enough $\{\epsilon_i\}_{i=1}^N$. Therefore, the proposed algorithm correctly recovers the global dictionaries (with the correct identity permutation). Finally, we have $\D^{g,(0)}=\frac{1}{N}\sum_{i=1}^N (\D^{(0)}_i)_{1:r^g}$, which leads to:
\begin{equation}
    \begin{aligned}
    d_{1,2}\left(\D^{g,(0)} , \rmD^{g*}\right)
    &\le \max_{1\le j \le r^g} \bnorm{\frac{1}{N}\sum_{i=1}^N (\D^{(0)}_i)_j-(\D^{g*})_j}_2\\ 
    &\le \max_{1\le j \le r^g} \frac{1}{N}\sum_{i=1}^N\bnorm{ (\D^{(0)}_i)_j-(\D^{g*})_j}_2\\
    &\le \max_{1\le j \le r^g} \frac{1}{N}\sum_{i=1}^N \epsilon_i\\
    &=\frac{1}{N}\sum_{i=1}^N \epsilon_i.
\end{aligned}
\end{equation}
This completes the proof of Theorem~\ref{theorem: shortest path}. $\hfill\square$
\subsection{Proof of Theorem~\ref{theorem: convergence PerMA}}
Throughout this section, we define:
\begin{equation}
    \Bar{\rho} := \frac{1}{N}\sum_{i=1}^N\rho_i,\quad\quad \Bar{\psi} := \frac{1}{N}\sum_{i=1}^N\psi_i.
\end{equation}
We will prove the convergence of the global dictionary in Theorem~\ref{theorem: convergence PerMA} by proving the following induction: at each $t\ge 1$, we have
\begin{equation}\label{eq: induction}
    d_{1,2}\bbracket{\bD^{g,(t+1)},\bD^{g*}}\le \Bar{\rho}d_{1,2}\bbracket{\bD^{g,(t)},\bD^{g*}}+ \bar{\psi}.
\end{equation}

At the beginning of communication round $t$, each client $i$ performs $\texttt{local\_update}$ to get $\bD^{(t+1)}_i$ given $\left[\bD^{g,(t)}\quad \bD^{l,(t)}_i\right]$. Without loss of generality, we assume
\begin{align}\label{assumption: aligned 2}
    \bI_{r_i\times r_i}&=\arg\min_{\bPi\in\mathcal{P}(r_i)}\bnorm{\bD^*_i\bPi-\left[\bD^{g,(t)}\quad \bD^{l,(t)}_i\right]}_{1,2},\\
    \bI_{r_i\times r_i}&=\arg\min_{\bPi\in\mathcal{P}(r_i)}\bnorm{\bD^*_i\bPi-\bD^{(t+1)}_i}_{1,2}.\label{assumption: aligned 3}
\end{align}
Assumed equalities \eqref{assumption: aligned 2} and \eqref{assumption: aligned 3} imply that the permutation matrix that aligns the input and the output of $\mathcal{A}_i$ is also $\bI_{r_i\times r_i}$. Specifically, the linear convergence property of $\mathcal{A}_i$ and Theorem~\ref{theorem: shortest path} thus suggest:
\begin{equation}
    \bnorm{\bbracket{\D^{(t+1)}_i}_j-\bbracket{\D^{*}_i}_j}_2\le \rho_i \bnorm{\bbracket{\D^{g,(t)}}_j-\bbracket{\D^{*}_i}_j}_2+\psi_i\quad\forall 1\le j\le r^g, 1\le i \le N.
\end{equation}
However, our algorithm is unaware of this trivial alignment. We will next show the remaining steps in $\texttt{local\_update}$ correctly recovers the identity permutation. The proof is very similar to the proof of Theorem~\ref{theorem: shortest path} since we are essentially running Algorithm~\ref{alg: shortest path} on a two-layer $\mathcal{G}$. For every $1\le i \le N$, $1\le j\le r^g$, we have 
\begin{equation}
    \begin{aligned}
    d_{1,2}\bbracket{\bbracket{\D^{(t+1)}_i}_j,\bbracket{\D^{g,(t)}}_j}&\le 
    d_{1,2}\bbracket{\bbracket{\D^{(t+1)}_i}_j,\bbracket{\D^{*}_i}_j} + d_{1,2}\bbracket{\bbracket{\D^{*}_i}_j,\bbracket{\D^{g,(t)}}_j}\\
    &\le 2\delta_i.
\end{aligned}
\end{equation}
Meanwhile for $k\ne j$,
\begin{equation}
    \begin{aligned}
    &d_{1,2}\bbracket{\bbracket{\D^{(t+1)}_i}_k,\bbracket{\D^{g,(t)}}_j}\\
    & \ge\ 
    d_{1,2}\bbracket{\bbracket{\D^{*}_i}_k,\bbracket{\D^{*}_i}_j}-d_{1,2}\bbracket{\bbracket{\D^{(t+1)}_i}_k,\bbracket{\D^{*}_i}_k} - d_{1,2}\bbracket{\bbracket{\D^{*}_i}_j,\bbracket{\D^{g,(t)}}_j}\\
    & \ge \sqrt{2-\frac{2\mu}{\sqrt{d}}}-2\delta_i.\\
    & \ge 2\delta_i.
\end{aligned}
\end{equation}
As a result, we successfully recover the identity permutation, which implies 
\begin{equation}
    \bnorm{\bbracket{\D^{g,(t+1)}_i}_j-\bbracket{\D^{g*}_i}_j}_2\le \rho_i \bnorm{\bbracket{\D^{g,(t)}}_j-\bbracket{\D^{g*}_i}_j}_2+\psi_i\quad\forall 1\le j\le r^g, 1\le i \le N.
\end{equation}
Finally, the aggregation step (Step~\ref{step: global aggregation} in Algorithm~\ref{alg: general alg}) gives:
\begin{equation}
    \begin{aligned}
    d_{1,2}\bbracket{\bD^{g,(t+1)},\bD^{g*}}&\le\bnorm{\frac{1}{N}\sum_{i=1}^N\D^{g,(t+1)}_i-\bD^{g*}}_{1,2} \\
    &=\max_{1\le j\le r^g} \bnorm{\bbracket{\frac{1}{N}\sum_{i=1}^N\D^{g,(t+1)}_i}_j-\bbracket{\bD^{g*}}_j}\\
    &\le \max_{1\le j\le r^g} \frac{1}{N}\sum_{i=1}^N\bnorm{\bbracket{\D^{g,(t+1)}_i}_j-\bbracket{\D^{g*}_i}_j}_2\\
    &\le \max_{1\le j\le r^g} \frac{1}{N}\sum_{i=1}^N\bbracket{\rho_i \bnorm{\bbracket{\D^{g,(t)}}_j-\bbracket{\D^{g*}_i}_j}_2+\psi_i}\\
    &\le  \frac{1}{N}\sum_{i=1}^N\bbracket{\rho_i d_{1,2}\bbracket{\bD^{g,(t)},\bD^{g*}}+\psi_i}\\
    &=\bar{\rho}d_{1,2}\bbracket{\bD^{g,(t)},\bD^{g*}}+\bar{\psi}.
\end{aligned}
\end{equation}
As a result, we prove the induction~\eqref{eq: induction} for all $0\leq t\leq T-1$. 
Inequality~\eqref{eq: local convergence} is a by-product of the accurate separation of global and local atoms and can be proved by similar arguments. The proof is hence complete. $\hfill\square$

\end{document}